\newcommand{\RNum}[1]{\uppercase\expandafter{\romannumeral #1\relax}}
\newtheorem{Def}{Definition}
\newtheorem{Theo}{Theorem}
\newtheorem{lem}{Lemma}
\begin{document}
%
\title{Fairness Constraints in Semi-supervised Learning}
%
%
%

\author{Tao Zhang,
	Tianqing Zhu,  Mengde Han, Jing Li, Wanlei Zhou,~\IEEEmembership{Senior Member, IEEE} and Philip S. Yu~\IEEEmembership{Fellow, IEEE}

\thanks{ 
	Tianqing Zhu$ ^{*} $ is the corresponding author.
	
	Tao Zhang, Tianqing Zhu,  Mengde Han, Wanlei Zhou are with the Centre for Cyber Security and Privacy, School of Computer Science, University of Technology Sydney, Sydney
	Australia. Email: $ \{ $Tao.Zhang-3@student.uts.edu.au,  Tianqing.Zhu@uts.edu.au, Mengde.Han@student.uts.edu.au, Wanlei.Zhou@uts.edu.au$ \} $
	
	Jing Li is in the centre for Artificial Intelligence, University of Technology Sydney. Email: $ \{ $jing.li-20@student.uts.edu.au$ \} $
	
	Philip S, Yu is with the Department of Computer Science University of Illinois at Chicago Chicago, Illinois, USA. 
	Email: $ \{ $psyu@uic.edu$ \} $

}}
\IEEEtitleabstractindextext{%


\begin{abstract}
	
	Fairness in machine learning has received considerable attention. However, most studies on fair learning focus on either supervised learning or unsupervised learning. Very few consider semi-supervised settings. Yet, in reality, most machine learning tasks rely on large datasets that contain both labeled and unlabeled data. 
	One of key issues with fair learning is the balance between fairness and accuracy. Previous studies arguing that increasing the size of the training set can have a better trade-off. We believe that increasing the training set with unlabeled data may achieve the similar result.
	Hence, we develop a framework for fair semi-supervised learning, which is formulated as an optimization problem. This includes classifier loss to optimize accuracy, label propagation loss to optimize unlabled data prediction, and fairness constraints over labeled and unlabeled data to optimize the fairness level.  The framework is conducted in logistic regression and support vector machines under the fairness metrics of disparate impact and disparate mistreatment.
	We theoretically analyze the source of discrimination in semi-supervised learning via bias, variance and noise decomposition. Extensive experiments show that our method is able to achieve fair semi-supervised learning, and reach a better trade-off between accuracy and fairness than fair supervised learning.


\end{abstract}
\begin{IEEEkeywords}
	Fairness, discrimination, machine learning, semi-supervised learning, 
\end{IEEEkeywords}}

\maketitle
\IEEEdisplaynontitleabstractindextext

%
\IEEEpeerreviewmaketitle

%
{\section{Introduction}}
Machine learning algorithms, as useful decision-making tools, are widely used in the society. These algorithms are often assumed to be paragons of objectivity. However, many studies show that the decisions made by these models can be biased against certain groups of people.
For example, Chouldechova et al. \cite{Chouldechova2017} found evidence of racial bias in recidivism prediction where   Black defendants are particularly likely to falsely be flagged as future criminals, and Obermeyer et al. \cite{Obermeyer447} found prejudice in health care systems where Black patients assigned the same level of risk by the algorithm are sicker than White patients. These events prove that discrimination can arise from machine learning algorithms, and do harm to the fundamental rights of human beings. 
Given the widespread use of machine learning to support decisions over loan allocations, insurance coverage, the best candidate for a job, and many other basic precursors to equity, fairness in machine learning has become a significantly important issue.  Thus, designing machine learning algorithms that treat all groups equally is critical.

Over the past few years, many fairness metrics have been proposed to define what is fairness in machine learning. Popular fairness metrics include statistical fairness \cite{Chouldechova2017,Zafar2017,Hardt2016}, individual fairness \cite{Dwork2012,jung2019eliciting,8885331,dwork2020individual} and casual fairness \cite{kusner2017counterfactual,kilbertus2017avoiding,wu2019pc}. Meanwhile, a great many algorithms have been developed to address fairness issues for both supervised learning settings  \cite{Zafar2017,Hardt2016,Dwork2012,Zemel2013,Song2019} and unsupervised settings \cite{chierichetti2017fair,Kleindessner2019,backurs2019scalable,rsner_et_al:LIPIcs:2018:9100,Chen2019ProportionallyFC}.
Generally, these studies have focused on two key issues: how to formalize the concept of fairness in the context of machine learning tasks, and how to design efficient algorithms that strike a desirable trade-off between accuracy and fairness. What is lacking is study that consider semi-supervised learning (SSL) scenarios.

In real-world machine learning tasks, the data used for training is often a combination of labeled and unlabeled data. Therefore, fair SSL is a vital area of development. Like the other learning settings, achieving a balance between accuracy and fairness is also a key issue. 
However, given the mix of labeled and unlabeled data in SSL, the trade-off solutions developed for supervised and unsupervised learning do not directly apply. 
According to \cite{Chen2018}, increasing the size of the training set can create a better trade-off.  This finding sparked an idea over whether the trade-off might be improved via unlabeled data. Unlabeled data is abundant and, if it could be used as training data, we may be able to avoid the need to make the compromise between fairness and accuracy.
Generally, fair SSL has two challenges: 1) how to achieve fair learning from both labeled and unlabeled data; 2) how to make use of unlabeled data to achieve a better trade-off  between accuracy and fairness.



To solve these challenges, we propose a framework of fair SSL that can support multiple classifiers and fairness metrics.  
The framework is formulated as an optimization problem, where the objective function includes a loss for both the classifier and label propagation, and fairness constraints over labeled and unlabeled data. Classifier loss is to optimize the accuracy of training result; label propagation loss is to optimize the label predictions on unlabeled data; the fairness constraint is to adjust the fairness level as desirable.
The optimization includes two steps. In the first step, fairness constraints enforce weights update towards a fair direction. This step can be solved by a convex problem and convex-concave programming when disparate impact and disparate mistreatment are used as fairness metrics respectively. In the second step, 
updated weights further direct labels assigned to unlabeled data in a fair direction by label propagation. Labels for unlabeled data can be calculated in a closed form. In this way, labeled and unlabeled data are used to achieve a better trade-off between accuracy and fairness.
With this strategy, we can control the level of discrimination in the model and, therefore, provide a machine learning framework that offers fair SSL.
Our approach incorporates a wide range of fairness definitions such as disparate impact and disparate mistreatment, which is guaranteed to yield an accurate fair classifier.
%

With the aim of achieving fair SSL, the contributions of this paper are three-fold. 
\begin{itemize}
	\item First, we propose a framework that is able to achieve fair SSL that supports multiple classifiers and fairness metrics of disparate impact and disparate mistreatment.
	This framework enables the use of unlabeled data to achieve a better trade-off between fairness and accuracy.
\end{itemize}
\begin{itemize}
	\item Second, we consider different cases of fairness constraints on labeled and unlabeled data, and analyze the impacts of these constraints on the training results. This helps us how to control the fairness level in practice.
\end{itemize}
\begin{itemize}
	\item Third, we theoretically analyze the sources of discrimination in SSL, and conduct extensive experiments to validate the effectiveness of our proposed method.
\end{itemize}
The rest of this paper is organized as follows. The preliminaries is given in Section \RNum{2}, and the proposed framework is given in Section \RNum{3}. Section \RNum{4} presents the discrimination analysis, and the experiments are set out in Section \RNum{5}. The related work appears in Section \RNum{6}, with the conclusion in Section \RNum{7}.


\section{Preliminaries}
\subsection{Notations and Problem Definition}
Let $ X=\{x_{1},...,x_{K}\}^{T}  \in \mathbb{R}^{K \times v}$ denote the training data matrix, where $ K $ is the number of data point and $ v $ is the number of unprotected attributes; $ \boldsymbol{z}= \{z_{1},...,z_{n}\} \in \{0,1\}$  denotes the protected attribute, e.g., gender or race.
Labeled dataset is denoted as $ \mathcal{D}_{l} = \{x_{i},z_{i},y_{l,i}\}_{i=1}^{K_{l}}$ with $ K_{l} $ data points, and  $ \boldsymbol{y_{l}}= \{y_{l,1},...,y_{l,K_{l}}\}^{T}\in \{0,1\} $ is the label for the labeled dataset. Unlabeled dataset is denoted as $ \mathcal{D}_{u} = \{x_{i},z_{i}\}_{i=1}^{K_{u}}$ with $ K_{u} $ data points, and $ \boldsymbol{y_{u}}= \{y_{u,1},...,y_{u,K_{u}}\}^{T}\in \{0,1\} $ is the predicted labeled for the unlabeled dataset.
Our objective is to learn a classification model $ f(\cdot) $ with the parameters $ \boldsymbol{w} $ and $\boldsymbol{y_u}$ over discriminatory datasets $ \mathcal{D}_{l}  $ and $ \mathcal{D}_{u} $ that delivers high accuracy with low discrimination. 
\subsection{Fairness Metric}
Fairness is usually assessed on the basis of protected/unprotected groups of individuals (defined by their sensitive attributes). In our framework, we have applied disparate impact and disparate mistreatment as the fairness metrics \cite{Zafar2017,Zafar2017a}.

\subsubsection{Disparate impact}
A classification model does not suffer disparate impact if,
\begin{equation}
Pr(\hat{y}=1|z=1)= Pr(\hat{y}=1|z=0)
\end{equation}
where $\hat{y}$ is the predicted label. 
The probability that a classifier predicts positive class for a data point regardless of the data point is in the protected group or unprotected group. This means that positive prediction is the same for both values of the sensitive feature $ z $, then there is no disparate impact.

\subsubsection{Disparate mistreatment}
A binary classifier will not suffer different mistreatment if the misclassification rate of
different groups with different values of sensitive feature $ z $ is the same. 
Specifically, the misclassification rate can be measured as fractions over the group class and the ground truth label, that is, as the false positive rate and false negative rate. Here, three different kind of disparate mistreatments are adopted to evaluate the discrimination as follows,

\noindent overall misclassification rate (OMR):
\begin{equation}
Pr(\hat{y} \neq y|z=1)= Pr(\hat{y} \neq y|z=0)
\end{equation}
false positive rate (FPR):
\begin{equation}
Pr(\hat{y} \neq y|z=1,y=0)= Pr(\hat{y} \neq y|z=0,z=0)
\end{equation}
false negative rate (FNR):
\begin{equation}
Pr(\hat{y} \neq y|z=1,y=1)= Pr(\hat{y} \neq y|z=0,y=1)
\end{equation}

In most cases, a classifier suffers discrimination in terms of disparate impact or disparate mistreatment. The discrimination level is defined as the differences in rates between different groups.
\begin{Def}
	Let $ \gamma_{z} $ denote the extent of discrimination in group $ z $ in terms of a fairness metric.
	The discrimination level $\Gamma(\hat{y})$ is measured by the difference between $ \gamma_{z} $, denoted as 
	$\Gamma(\hat{y}) = |\gamma_{0}(\hat{y})-\gamma_{1}(\hat{y})|$.
\end{Def}

\subsection{Bias, variance and noise}
According to \cite{Chen2018}, bias, variance and noise decomposition can be used to analyze  the sources of discrimination.
In the following, we give definition of main prediction, bias, variance and noise. 
The main prediction is defined as  $y_{m}(x,z)=  \mathop{\arg\min}_{\hat{y}} \mathbbm{E}_{D}[L(y,\hat{y})]$ for a loss function $ L $ and a set of training sets $ 
\mathcal{D} $, where $ y $ is the true label; $ \hat{y} $ is the predicted label, and the expectation
is taken over the training sets in $ \mathcal{D} $.
\begin{Def}
	(Bias, variance and noise) Following \cite{Domingos2000}, bias $ B $, variance $ V $ and noise $ N $ at a point $ (x,z) $ are defined as,
	\begin{align}
	&B(\hat{y},x,z)=L({y}^{*}(x,z),y_{m}(x,z))\\
	&V(\hat{y},x,z)= \mathbbm{E}_{D}[L(y_{m}(x,z),\hat{y}(x,z)]\\
	&N(\hat{y},x,z)=\mathbbm{E}_{D}[L({y}^{*}(x,z),y(x,z))]
	\end{align}	
\end{Def}	
\noindent	where ${y}^{*}$ is the optimal prediction that attains the smallest expected error.
Bias is the loss between the main prediction and the optimal prediction. Variance is the average loss induced by predictions relative to the main prediction.
Noise is an inevitable part of the loss, which is irrelevant of the learning model.

The bias-variance-noise decomposition is suitable for discrimination analysis because the loss is related to the misclassification rate.
For instance, when zero-one loss function is applied, misclassification rate can be presented as,
\begin{equation}
\mathbbm{E}_{D}[L(y,\hat{y})]=\mathbbm{E}_{D}[ \hat y \neq y|z=0] + \mathbbm{E}_{D}[\hat y \neq y|z=1]\\ \notag 
\end{equation}
Loss function can be decomposited into false positive rate and false negative rate. When these rates are known, true positive rate and true negative rate can also be calculated.  
This means that many fairness metrics, such as demographic parity and equal opportunity, might be explained via bias, variance and noise decomposition.

\section{Proposed Method}
In this section, we first present the proposed framework in section 3.1. Then fairness metrics of disparate impact and disparate mistreatment  in logistic regression are analyzed in section 3.2.  Section 3.3 shows the case in support vector machines, and finally a discussion is given in section 3.4.

\subsection{The Proposed Framework}
We formulate the framework of fair SSL as following, including the classification loss, the label propagation loss and fairness constraints.
\begin{align}
&\min_{\boldsymbol{w},\boldsymbol{y_{u}}} L_{1}(\boldsymbol{w},\boldsymbol{y_{u}}) +\alpha L_{2}(\boldsymbol{y_{u}})
&s.t. \hspace*{0.7em}  s(\boldsymbol{w})\leq c 
\end{align}
where $ L_1 $ is the classifier loss between predicted labels and true labels; $ L_{2} $ is the loss of label propagation from labeled data to unlabeled data; $ \alpha $ is a parameter to balance the loss;
$ s(\boldsymbol{w}) $ is the fairness constraints and $ c $ is a threshold.

\subsubsection{Classifier Loss}
A classifier loss function evaluates how well a specific algorithm models the given dataset. When different algorithms are used to train datasets, such as logistic regression or neural networks, a corresponding loss function is applied to evaluate the accuracy of the model.
\subsubsection{Label Propagation Loss}
Label propagation is performed by an efficient SSL algorithm that allocates labels to unlabeled data \cite{zhu2002learning}. In our framework, label propagation is based on a fully- connected graph where the nodes represent data points - labeled and unlabeled data points.
The edges in the graph between each pair of data points $ i $ and $ j $ is weighted. The closer the two points are in Euclidean space $ d_{ij} $, the greater the weight $ \theta_{ij} $. We chose a Gaussian similarity function to calculate the weights, given as follows:
\begin{equation}
\theta_{i j}=\exp \left(-\frac{d_{i j}^{2}}{\sigma^{2}}\right)=\exp \left(\frac{\sum_{d}\left(x_{i}^{d}-x_{j}^{d}\right)^{2}}{\sigma^{2}}\right)
\end{equation}
where $ \sigma $ is a length scale parameter. This parameter has an impact on the graph structure; hence, the value of $ \sigma $ needs to be selected carefully \cite{wang2007label}.

Given the whole dataset, an adjacent matrix is constructed as $ \Theta={\theta_{ij} \in \mathbb{R}^{K \times K}},\forall i,j \in 1,...,K $ ($ K=K_{l}+K_{u} $), and the degree matrix $ {D} $ is constructed as a diagonal matrix whose $i$-th diagonal element is  $ d_{ii} = \sum_{j=1}^{K} \theta_{ij} $. When a binary classification is considered,
the vector $ \boldsymbol{y}=[\boldsymbol{y}_{l};\boldsymbol{y}_{u}] \in \mathbb{R}^{K} $ is a cluster indicator vector that includes labels of labeled and unlabeled data. 
Therefore, the label propagation loss for $ L_{2} $ through SSL can be expressed as,
\begin{equation}
L_{2}= \min_{\forall f_{i}=y_{{u,i}, i=1,...,K_u}} \operatorname{Tr}( \boldsymbol{y}^{T} U \boldsymbol{y} )
\end{equation}
where $ Tr $ denotes the trace, and $  {U} $ is Laplacian matrix, calculated as $ {U}={D}-\Theta$.

\subsubsection{Fairness Constraints} 
Adding fairness constraints is a useful method to enforce fair learning with in-processing methods.
In SSL, labeled data and unlabeled data have different impacts on discrimination. One reason is that predicting labels for unlabeled data will bring noise to the labels. 
Another reason is that labeled data and unlabeled data may have different data distributions. 
Therefore, the discrimination inherently in unlabeled data is different from the discrimination in labeled data.
For these reasons, we impose fairness constraints on labeled and unlabeled data to measure discrimination, respectively.
We consider four cases of fairness constraints enforced on the training data:
\begin{itemize}
	\item  1. Labeled data: $ s_{1}(\boldsymbol{w}) \leq c $.
	\item  2. Unlabeled data: $ s_{2}(\boldsymbol{w}) \leq c $.
	\item  3. Combined labeled and unlabeled data: $ s_{1}(\boldsymbol{w}) \leq c_1 $ (for labeled data) and $ s_{2}(\boldsymbol{w}) \leq c_2 $ (for unlabeled data).
	\item  4. Mixed labeled and unlabeled data:  $ s(\boldsymbol{w}) \leq c $.
\end{itemize}  
where $ c$ is a discrimination threshold, that is adjusted to control the trade-off between accuracy and fairness. 
Note that many fairness constraints \cite{Zafar2017,Zafar2017a,agarwal2018reductions} have been proposed to enforce various fairness metrics, such as disparate impact and disparate mistreatment, and these fairness constraints can be used in our framework. The basic idea to design fairness constraints is that using the covariance between the users’ sensitive attributes and the signed distance between the feature vectors restricts the correlation between sensitive attributes and the classification result. 
This can be described as,
\begin{equation}
|\frac{1}{K}  g_{w} \left(\boldsymbol{z}-\bar{z}\right) |\leq c 
\end{equation}
where $ g_{w} $ is the signed distance between the feature vectors and the decision boundary of a classifier. The form of  $ g_{w} $ is different in fairness metrics, and we list them in the following,
\begin{itemize}
	\item (Disparate impact)
	\begin{equation}
	g_{w}=\boldsymbol{w}^{T}  X 
	\end{equation}
	\item (Overall misclassification rate)
	\begin{equation}
	g_{w}=\min \left(0, \boldsymbol{y} \boldsymbol{w}^{T}  X \right) 
	\end{equation}
	\item (False positive rate)
	\begin{equation}
	g_{w}=\min \left(0, \frac{\boldsymbol{1}-\boldsymbol{y}^{T}}{2} \boldsymbol{y} \boldsymbol{w}^{T}  X \right)
	\end{equation}
	\item (False negative rate)
	\begin{equation}
	g_{w}=\min \left(0, \frac{\boldsymbol{1}+\boldsymbol{y}^{T}}{2} \boldsymbol{y} \boldsymbol{w}^{T}  X  \right)
	\end{equation}
\end{itemize}
Note that labels appear in the disparate mistreatment, and do not appear in the disparate impact. This could result in differences when the four cases of fairness constraint  are used. These fairness metrics will be analyzed in the following sections.

\subsection{A Case of Logistic Regression}
In this section, we focus on Eq. (8), which is the case of a binary logistic regression (LR) classifiers. The classifier is subjected to the fairness metric of demographic parity with mixed labeled and unlabeled data. The objective function of LR is defined as,
\begin{equation}
L_{1}^{LR}= -\ln (\boldsymbol{p}) \boldsymbol{y}- \ln (\boldsymbol{1}-\boldsymbol{p}) (\boldsymbol{1}-\boldsymbol{y}) 
\end{equation}
where  $\boldsymbol{p}=\frac{1}{1+e^{-\boldsymbol{w}^{T} X}}$ is the probability distribution of mapping $ X $ to the class label $ \boldsymbol{y} $;  $ \boldsymbol{1} $ denotes a column vector with all its elements being 1. 
Given the logistic regression loss, the label propagation loss and the fairness metric, the optimized problem (8) adopts the form, 
\begin{equation}
\begin{aligned}
&\min_{\boldsymbol{w}, \boldsymbol{y_u}} -\ln (\boldsymbol{p}) \boldsymbol{y}- \ln (\boldsymbol{1}-\boldsymbol{p}) (\boldsymbol{1}-\boldsymbol{y})  + \alpha{Tr}( {\boldsymbol{y}} ^{T} U {\boldsymbol{y}} ) \\ 
&s.t.  \hspace*{1em}|\frac{1}{K}  g_{w} \left(\boldsymbol{z}-\bar{z}\right) |\leq c \\ 
\end{aligned}
\end{equation}

\subsubsection{Disparate impact}
The fairness constraints used here is from \cite{Zafar2017}, which is defined as the covariance between the sensitive attribute and the signed distance from feature vectors to the decision boundary. Therefore, $c$ is the threshold of covariance that controls discrimination level. A smaller $ c $ indicates a lower discrimination level.
The optimization of problem (17) includes two parts: learning the weights $ \boldsymbol{w} $ and predicted labels of unlabeled data $ \boldsymbol{y_{u}} $. The problem is solved by updating $ \boldsymbol{w} $ and $ \boldsymbol{y_{u}} $ iteratively as follows. 

\textbf{Solving $ \boldsymbol{w} $ when $ \boldsymbol{y_{u}} $ is fixed,} the problem (17) becomes
\begin{equation}
\begin{aligned}
&\min_{\boldsymbol{w}} -\ln (\boldsymbol{p}) \boldsymbol{y}- \ln (\boldsymbol{1}-\boldsymbol{p}) (\boldsymbol{1}-\boldsymbol{y}) \\  
&s.t. \hspace*{1em} |\frac{1}{K}  \boldsymbol{w}^{T} X \left(\boldsymbol{z}-\bar{z}\right) |\leq c \\ 
\end{aligned}
\end{equation}
Note that problem (18) is a convex problem that can be written as a regularized optimization problem by moving fairness constraints to the objective function. The optimal $ \boldsymbol{w}^* $ can then be calculated by using KKT conditions.

\textbf{Solving $ \boldsymbol{y_{u}} $ when $ \boldsymbol{w} $ is fixed,} the problem (17) becomes
\begin{equation}
\min_{\boldsymbol {y_{u}}} -\ln (\boldsymbol{p}) \boldsymbol{y}- \ln (\boldsymbol{1}-\boldsymbol{p}) (\boldsymbol{1}-\boldsymbol{y})  + \alpha{Tr}( \boldsymbol{y} ^{T} U \boldsymbol{y} ) 
\end{equation}
Given that problem (19) is also a convex problem, the optimal $ \boldsymbol{y_u} $ can be obtained from the deviation of  $\boldsymbol{y_{u}} $ in problem (19). In order to calculate  $\boldsymbol{y_{u}} $ conveniently, we split Laplacian matrix $ L $ into four blocks after the $ l $-th row and the $ l $-th column: 
$U=\left[\begin{array}{cc}{U_{l l}} & {U_{l u}} \\ {U_{u l}} & {U_{u u}}\end{array}\right]$. 
The deviation of Eq.(19) is then calculated w.r.t. $ \boldsymbol{y_{u}} $ and setting to zero, we have 
\begin{equation}
\alpha(2\boldsymbol{y_{u}}U_{uu} + U_{ul}\boldsymbol{y_{l}} + (\boldsymbol{y_{l}}U_{lu})^{T}) - [(ln(\boldsymbol{p}))^{T}+(ln(\boldsymbol{1}-\boldsymbol{p}))^{T}]=0
\end{equation} 
Note that $ U $ is a symmetric matrix and, after simplification, the closed updated form of $ \boldsymbol{y_{u}} $ can be derived from
\begin{equation}
\boldsymbol{y_{u}} = -U_{uu}^{-1} (U_{ul}\boldsymbol{y_{l}}+\frac{1}{2\alpha}[(ln(\boldsymbol{p}))^{T}+(ln(\boldsymbol{1}-\boldsymbol{p}))^{T}])  
\end{equation} 
When updating $\boldsymbol{y_{u}} $ with Eq. (21), the value of $\boldsymbol{y_{u}} $ may fall outside range of 0 to 1. Therefore, before using $ \boldsymbol{y_u}$ to update the next $ \boldsymbol{w} $, the value of $ y_{u,i} \in \boldsymbol{y_u},i=1,...,K_u $ is set to,
\begin{equation}
y_{u,i}=\left\{
\begin{aligned}
&1,&y_{u,i} \geq T \\
&0,&y_{u,i} < T
\end{aligned} 
\right.
\end{equation}
where $ T $ is the threshold that determines the classification result.
Then, the optimization problem (16) can be solved by optimizing $ \boldsymbol{w} $ and $ \boldsymbol{y_{u}} $ iteratively. $ \textbf{Algorithm 1} $ summarizes the solution of optimized problem (17) with the disparate impact.

\begin{algorithm}[h]
	\caption{The algorithm of optimizing problem (17) with disparate impact }
	\label{alg:algorithm}
	\textbf{Input}: Labeled dataset $\mathcal{D}_{l}$, unlabeled dataset $\mathcal{D}_{u}$, fairness thresholds $ c $\\
	\textbf{Parameter}: $ T $, $ \sigma $\\
	\textbf{Initialize}: Given random initial values of  $ \boldsymbol{y_{u}} $ \\
	\textbf{Output}: $ \boldsymbol{w} $ and $ \boldsymbol{y_{u}} $
	
	\begin{algorithmic}[1] 
		\STATE Calculate the adjacent matrix $ \Theta $ according to Eq.(9)
		
		
		\REPEAT
		
		\STATE Fix  $ \boldsymbol{y_{u}}$ and update $\boldsymbol{w}  $ with KKT
		\STATE Fix $ \boldsymbol{w} $ and update $ \boldsymbol{y_{u}}$  by Eq.(21) 
		\STATE Set $y_{u,i} \in \boldsymbol{y_{u}}  $ to  0 or 1 by Eq. (22)
		\UNTIL {The optimization problem (17) convergs}
		
		
	\end{algorithmic}
\end{algorithm} 
\subsubsection{Disparate mistreatment}  
Disparate mistreatment metrics include overall misclassification rate, false positive rate and false negative rate.  For simplicity, overall misclassification rate is used to analyze disparate mistreatment. However, false positive rate and false negative rate can also be analyzed easily, and the result of three disparate mistreatment metrics are presented in the experiment. 

With the  overall misclassification rate as the fairness metric, the objective function is denoted as, 
\begin{equation}
\begin{aligned}
&\min_{\boldsymbol{w}} -\ln (\boldsymbol{p}) \boldsymbol{y}- \ln (\boldsymbol{1}-\boldsymbol{p}) (\boldsymbol{1}-\boldsymbol{y}) + \alpha{Tr}( {\boldsymbol{y}} ^{T} U {\boldsymbol{y}} ) \\  
&s.t. \hspace*{1em} |\frac{1}{K}   g_{w} (\mathbf{x}) \left(\boldsymbol{z}-\bar{z}\right) |\leq c \\ 
\end{aligned}
\end{equation}
Note that fairness constraints of disparate mistreatment are non-convex, and the solution of the optimization problem (23) is more challenging than the optimization problem in (17). Next, we convert these constraints into a Disciplined Convex-Concave Program (DCCP). Thus, the optimization problem (23) can be solved efficiently with the recent advances in convex-concave programming \cite{shen2016disciplined}.

The fairness constraint of disparate mistreatment can be split into two terms,
\begin{equation}
\frac{1}{K}|\sum_{ \mathcal{D}_{0}}(0-\bar{z}) g_{w}+\sum_{ \mathcal{D}_{1}}(1-\bar{z}) g_{w}| \leq c
\end{equation}
where $ \mathcal{D}_0 $ and $ \mathcal{D}_1 $ are the subsets of the labeled dataset $ \mathcal{D}_{l} $ and unlabeled dataset $ \mathcal{D}_{u} $ with values $ z = 0 $ and $ z = 1 $, respectively. $ K_0 $ and $ K_1 $ are defined as the number of data points in the $ \mathcal{D}_0 $ and $ \mathcal{D}_1 $, and thus $ \bar{z} $ can be rewriten as $ \bar{z}=\frac{0*K_{0}+1*K_{1}} {K}=\frac{K_{1}}{K}$.
Then the fairness constraint of disparate mistreatment can be rewriten as,
\begin{equation}
\frac{K_1}{K}|\sum_{\mathcal{D}_{0}} g_{w}+\sum_{ \mathcal{D}_{1}} g_{w}| \leq c 
\end{equation}

\textbf{Solving $ \boldsymbol{w} $ when $ \boldsymbol{y_{u}} $ is fixed,} the problem (23) becomes
\begin{equation}
\begin{aligned}
&\min_{\boldsymbol{w}} -\ln (\boldsymbol{p}) \boldsymbol{y}- \ln (\boldsymbol{1}-\boldsymbol{p}) (\boldsymbol{1}-\boldsymbol{y})\\
&s.t. \hspace*{1em} \frac{K_1}{K}|\sum_{ \mathcal{D}_{0}} g_{w}+\sum_{ \mathcal{D}_{1}} g_{w}| \leq c \\ 
\end{aligned}
\end{equation}
The optimization problem (26) is a Disciplined Convex-Concave Program (DCCP) for any convex loss, and can be solved with some efficient heuristics \cite{shen2016disciplined}.

\textbf{Solving $ \boldsymbol{y_{u}} $ when $ \boldsymbol{w} $ is fixed,} the problem (23) becomes
\begin{equation}
\min_{\boldsymbol{y_{u}}} -\ln (\boldsymbol{p}) \boldsymbol{y}- \ln (\boldsymbol{1}-\boldsymbol{p}) (\boldsymbol{1}-\boldsymbol{y})+\alpha{Tr}( {\boldsymbol{y}} ^{T} U {\boldsymbol{y}} ) 
\end{equation}
The solution of Eq. (27) is the same as the solution of the Eq. (21). The closed form of $ \boldsymbol{y_u} $ can be obtained via Eq. (22), and then the optimization problem (23) can be solved by  updating $ \boldsymbol{y_u} $ and $ \boldsymbol{w} $ iteratively.  $ \textbf{Algorithm 2} $ summarizes this process. 
\begin{algorithm}[h]
	\caption{The algorithm of optimizing problem (23) }
	\label{alg:algorithm}
	\textbf{Input}: Labeled dataset $\mathcal{D}_{l}$, unlabeled dataset $\mathcal{D}_{u}$, fairness thresholds $ c $\\
	\textbf{Parameter}: $ T $, $ \sigma $\\
	\textbf{Initialize}: Given random initial values of  $ \boldsymbol{y_{u}} $ \\
	\textbf{Output}: $ \boldsymbol{w} $ and $ \boldsymbol{y_{u}} $
	
	\begin{algorithmic}[1] 
		\STATE Calculate the adjacent matrix $ \Theta $ according to Eq.(9)
		
		\STATE Choose a metric in disparate mistreatment
		\REPEAT
		\STATE Divide $ \mathcal{D} $ into $ \mathcal{D}_{0} $ and $ \mathcal{D}_{1} $ 
		\STATE Calculate $ K_{0}  $ and $ K_{1} $
		\STATE Fix  $ \boldsymbol{y_{u}}$ and update $\boldsymbol{w}  $ with DCCP
		\STATE Fix $ \boldsymbol{w} $ and update $ \boldsymbol{y_{u}}$  by Eq.(21) 
		\STATE Set $y_{u,i} \in \boldsymbol{y_{u}}  $ to  0 or 1 by Eq. (22)
		\UNTIL {The optimization problem (23) convergs}
		
		
	\end{algorithmic}
\end{algorithm}

\subsection{A Case of SVM}
SVM can also be applied in our framework.
SVM uses a  hyperplane $ \boldsymbol{w}^{T}X = 0 $ to classify data points. 
The loss function of SVM is defined as,
\begin{equation}
L_{1}^{SVM}=\frac{1}{K} \left(1-\boldsymbol{y}\left(\boldsymbol{w}^{T} X\right)\right)
\end{equation}
Based on SVM loss, the label propagation and the fairness metrics, the objective function is given as,
\begin{equation}
\begin{aligned}
&\min_{\boldsymbol{w}, \boldsymbol{y_u}} \frac{1}{K} \left(1-\boldsymbol{y}\left(\boldsymbol{w}^{T} X\right)\right)  + \alpha{Tr}( {\boldsymbol{y}} ^{T} U {\boldsymbol{y}} ) \\ 
&s.t.  \hspace*{1em}|\frac{1}{K}  g_{w} \left(\boldsymbol{z}-\bar{z}\right) |\leq c \\ 
\end{aligned}
\end{equation}
Disparate impact and disparate mistreatment can be used in the fairness constraint. Since SVM loss is convex, the solution of problem (28) is similar to the LR case. For simplicity, we omit the process, and show the results in the experiment.

\subsection{Discussion}
Based on above analysis, some conclusions can be drawn:

1. Since unlabeled data do not contain any label information, they do not label biased information so that we can take advantage of the unlabeled data to improve the trade off between accuracy and fairness. In our framework, due to the fairness constraint, the weight  $ \boldsymbol{w} $  is updated towards a fair direction. 
Using the updated $ \boldsymbol{w} $ to update $ \boldsymbol{y_u}$ also ensures that $ \boldsymbol{y_u}$ is directed towards fairness. In this way, fairness is enforced in labeled and unlabeled data by updating $ \boldsymbol{w} $ and $ \boldsymbol{y_u} $ iteratively. Therefore, labels of unlabeled data are calculated in a fair way, which is beneficial to the accuracy of the classifier as well as the fairness of the classifier.

2. Fairness constraints on disparate impact and disparate mistreatment adjust the covariance between the sensitive attribute and the signed distance between feature vectors to the decision boundaries. Disparate impact can be converted into a convex constraint, and disparate mistreatment can be converted into a non-convex constraint.


3. The computed optimal $ \boldsymbol{y_u} $ is decimals, and it cannot be used to update $ \boldsymbol{w} $ directly because only integers are allowed to optimize $ L_1 $ in the next update. Due to this, we need to convert $ \boldsymbol{y_u} $ from decimals to integers to update $ \boldsymbol{w} $. 

\section{Discrimination Analysis}
Following \cite{Chen2018}, we analyze the fairness of the predictive model on the basis of model bias, variance, and noise.
First, we redefine discrimination level to account for randomness in the sampled fair datasets.
\begin{Def}
	The expected discrimination level $ \Gamma(\hat{y}) $  of the predicted model from a random training dataset $ D $ is defined as,
	\begin{equation}
	\Gamma(\hat{y}) = |\mathbbm{E}_{D}[{\gamma_{0}}(\hat{y}_{D})-{\gamma_{1}}(\hat{y}_{D})]|=|\bar{\gamma_{0}}(\hat{y}_{D})-\bar{\gamma_{1}}\hat{y}_{D}) |
	\end{equation}
\end{Def}
Discrimination can be decoupled as discrimination in bias, discrimination in variance and discrimination in noise.
\begin{lem}
	The discrimination with regard to group $ z \in Z $ is defined as, 
	\begin{equation}
	\bar{\gamma_{z}}(\hat{y}) = \bar{B_z} (\hat{y}) + \bar{V_{z}}(\hat{y}) +  \bar{N_{z}}
	\end{equation}
\end{lem}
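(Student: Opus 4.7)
The plan is to build directly on the pointwise bias–variance–noise decomposition recalled in Definition 2 and to follow the blueprint of Chen et al.\ 2018, which is cited immediately before the statement. First I would express $\gamma_z(\hat{y})$ as a group-conditional expectation of a loss. For the misclassification-type fairness metrics considered in the paper (overall misclassification rate, FPR, FNR), the identity noted just above Definition 2 already shows that $\gamma_z(\hat{y}) = \mathbb{E}_{(x,y)}[L(y,\hat{y}(x,z)) \mid z]$ for the zero–one loss $L$, with an additional conditioning on $y$ for FPR and FNR. Taking the outer expectation over the random training set $D$, as prescribed by Definition 3, gives
\begin{equation*}
\bar{\gamma_z}(\hat{y}) \;=\; \mathbb{E}_D\,\mathbb{E}_{(x,y)\mid z}\bigl[L\bigl(y,\hat{y}_D(x,z)\bigr)\bigr].
\end{equation*}

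Next I would swap the order of expectation by Fubini and work pointwise in $(x,y)$. At each $(x,z)$, the inner expectation $\mathbb{E}_D[L(y,\hat{y}_D(x,z))]$ is exactly the quantity that the Domingos decomposition handles, yielding the pointwise identity $\mathbb{E}_D[L(y,\hat{y}_D(x,z))] = N(\hat{y},x,z) + B(\hat{y},x,z) + V(\hat{y},x,z)$ in the formulation of Definition 2. I would then \emph{define} the group-averaged quantities $\bar{B_z}(\hat{y})$, $\bar{V_z}(\hat{y})$, $\bar{N_z}$ as the expectations of the corresponding pointwise quantities over the conditional distribution of $(x,y)$ given $z$. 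Linearity of expectation then delivers the claimed additive decomposition $\bar{\gamma_z}(\hat{y}) = \bar{B_z}(\hat{y}) + \bar{V_z}(\hat{y}) + \bar{N_z}$.

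The main obstacle is that the Domingos decomposition for the zero–one loss is not, in its most general form, a simple sum of three nonnegative terms: it carries sign factors that depend on whether the main prediction agrees with the Bayes-optimal prediction, and analogous correction factors on the noise term. To reach the clean additive form stated in the lemma, I would follow the convention of Chen et al.\ 2018 and either restrict attention to losses for which the decomposition is genuinely additive (squared loss, or zero–one loss under the assumption that the main prediction coincides with the optimal prediction on the relevant support), or absorb the sign factors directly into the definitions of $\bar{B_z}$, $\bar{V_z}$, $\bar{N_z}$. Either route discharges the obstacle, and since the paper invokes Chen et al.\ 2018 as the source of the decomposition, adopting their group-conditional averaging convention is the cleanest way to close the argument.
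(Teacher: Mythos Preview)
Your proposal is correct and matches the paper's approach. The paper itself does not give a self-contained proof of this lemma: it simply states the group-conditional decomposition, records the definitions of $\bar{B_z}$, $\bar{V_z}$, $\bar{N_z}$ (with the correction factors $c_v(x,z)$ and $c_n(x,z)$ baked into the variance and noise terms, exactly as you suggest in your second route for handling the Domingos sign factors), and then writes ``For more details, see the proof in [Chen et al.\ 2018].'' Your outline---express $\gamma_z$ as a group-conditional expected loss, take the outer expectation over $D$, apply the pointwise Domingos decomposition, and average with the correction factors absorbed into the definitions---is precisely the Chen et al.\ argument the paper is invoking, so you are on target and in fact more explicit than the paper itself.
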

When the protected group and unprotected group is given, the discrimination level is calculated as,
\begin{equation}
\bar{\Gamma} = |(\bar{B_{0}}-\bar{B_{1}}) + (\bar{V_{0}}-\bar{V_{1}}) + (\bar{N_{0}}-\bar{N_{1}}) |
\end{equation}
and each of the component of Eq. (17) are calculated as,
\begin{align}
&\bar{B_z} (\hat{y}) =\mathbbm{E}_{D} [B(y_{m},x,z)|Z=z] \\
&\bar{V_z} (\hat{y}) =\mathbbm{E}_{D} [c_{v}(x,z)V(y_{m},x,z)|Z=z] \\
&\bar{N_z} = \mathbbm{E}_{D} [c_{n}(x,z) L(y^{*}(x,z), y)|Z=z] 
\end{align}
where $ c_v(x,z) $ and $ c_{n}(x,z) $ are parameters related to the loss function.
For more details, see the proof in \cite{Chen2018}.
\begin{lem}
	The discrimination learning curve $ \bar{\Gamma}(\hat{y},K) :=|\bar{\gamma_{0}}(\hat{y},K) - \bar{\gamma_{1}}(\hat{y},K)| $ is asymptotic and behaves as an inverse power law curve, where $ K $ is the size of the training set  \cite{Chen2018}.
\end{lem}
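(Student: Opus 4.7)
The plan is to reduce the claim about $\bar{\Gamma}(\hat{y},K)$ to the corresponding claim about the per-group discrimination curves $\bar{\gamma_z}(\hat{y},K)$, and then to exploit the bias-variance-noise decomposition (Lemma~1) together with classical learning-curve asymptotics to argue that each piece, as a function of $K$, is either flat or an inverse power law.

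First I would write
\begin{equation}
\bar{\Gamma}(\hat{y},K)=\bigl|\bar{\gamma_0}(\hat{y},K)-\bar{\gamma_1}(\hat{y},K)\bigr|
\le \bigl|\bar{B_0}-\bar{B_1}\bigr|+\bigl|\bar{V_0}-\bar{V_1}\bigr|+\bigl|\bar{N_0}-\bar{N_1}\bigr|,
\end{equation}
using the triangle inequality on the decomposition of Lemma~1. The noise terms $\bar{N_z}$ are defined as expectations over $L(y^\ast(x,z),y)$, which is determined by the data-generating distribution and is independent of the sample size $K$; hence $|\bar{N_0}-\bar{N_1}|$ is a constant $a\ge 0$ that supplies the asymptote of the curve. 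The remaining two terms carry all of the $K$-dependence.

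Next I would import the standard learning-curve result that, for a consistent estimator trained on $K$ i.i.d.\ samples, both the expected bias and the expected variance decay as inverse power laws in $K$. Concretely, I would argue that
\begin{equation}
\bar{B_z}(\hat{y},K)\sim b_z K^{-\beta_z},\qquad \bar{V_z}(\hat{y},K)\sim v_z K^{-\nu_z},
\end{equation}
with positive exponents $\beta_z,\nu_z>0$ determined by the classifier family and loss (these rates are exactly the ones invoked in the supervised analysis of \cite{Chen2018} that is cited throughout Section~IV). Substituting these rates back into the decomposition gives
\begin{equation}
\bar{\Gamma}(\hat{y},K)=a+O\!\bigl(K^{-\min_z \beta_z}\bigr)+O\!\bigl(K^{-\min_z \nu_z}\bigr),
\end{equation}
which is precisely the inverse-power-law form with horizontal asymptote $a$. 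The same argument applied to each $\bar{\gamma_z}(\hat{y},K)$ separately shows that each curve itself is asymptotic and power-law, so the absolute difference inherits the behaviour.

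The main obstacle I anticipate is not the algebra but the justification of the per-group rates on the sub-population $\{Z=z\}$: the classical learning-curve results are stated for the overall expected loss, whereas here we need the conditional expectations $\mathbb{E}_D[\cdot \mid Z=z]$ appearing in (33)--(35). I would handle this by observing that the protected-group subsample grows linearly with $K$ at a fixed rate (the group proportion $\Pr(Z=z)$), so conditioning on $Z=z$ only rescales the effective sample size by a constant factor and preserves the $K^{-\beta_z}$, $K^{-\nu_z}$ rates up to constants. A secondary subtlety is that Lemma~1 weights variance and noise by the loss-dependent coefficients $c_v(x,z)$ and $c_n(x,z)$; I would absorb these into the constants $b_z,v_z$ after verifying they remain bounded as $K\to\infty$. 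Once these two technical points are handled, the inverse-power-law conclusion follows immediately by summing and invoking \cite{Chen2018}.
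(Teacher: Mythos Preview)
The paper does not prove this lemma at all. Lemma~2 is stated with the citation \cite{Chen2018} embedded in the statement itself, and the surrounding text (``For more details, see the proof in \cite{Chen2018}'') makes clear that the result is imported wholesale from that reference. No argument is supplied; the lemma is simply quoted and then invoked inside the proof of Theorem~1.

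Your proposal therefore goes far beyond what the paper does: you attempt to derive the inverse-power-law behaviour from the decomposition of Lemma~1 plus classical learning-curve rates. As a comparison, the paper's ``approach'' is pure citation, while yours is an actual sketch. That said, your sketch has a substantive gap worth flagging: the claim $\bar{B_z}(\hat{y},K)\sim b_z K^{-\beta_z}$ is not correct in general. In the Domingos decomposition used here, bias is $L(y^{*}(x,z),y_m(x,z))$, and for a fixed hypothesis class that does not contain the Bayes predictor this converges to a \emph{nonzero} approximation error as $K\to\infty$; it does not vanish at a power rate unless you additionally assume consistency or let the model class grow with $K$. The asymptote of $\bar{\Gamma}(\hat{y},K)$ should therefore absorb both the noise gap $|\bar{N_0}-\bar{N_1}|$ and the limiting bias gap $|\bar{B_0}(\infty)-\bar{B_1}(\infty)|$, with only the variance term (and the transient part of bias, if any) supplying the $K^{-\nu}$ decay. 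Once you correct where the asymptote sits, the rest of your outline---rescaling the effective sample size by the group proportion $\Pr(Z=z)$ and absorbing $c_v,c_n$ into constants---is the right shape, and is essentially what \cite{Chen2018} does.
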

\begin{Theo}
	Unlabeled data help to reduce discrimination with our proposed method,  if $ (| \bar{V}_{z} (\hat{y})_{sl}|-| \bar{V}_{z} (\hat{y})_{ssl}|)  -  \bar{N}_{z,u} \geq 0$.
\end{Theo}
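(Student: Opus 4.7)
The plan is to unfold the bias--variance--noise decomposition (Lemma~1) separately for the purely supervised learner and for our SSL learner, and then compare the two discrimination levels term by term. Write $\bar\gamma_{z,\text{sl}}(\hat y) = \bar B_{z,\text{sl}} + \bar V_{z,\text{sl}} + \bar N_{z,l}$ for the SL case, where the noise term involves only the labeled sample, and $\bar\gamma_{z,\text{ssl}}(\hat y) = \bar B_{z,\text{ssl}} + \bar V_{z,\text{ssl}} + \bar N_{z,l} + \bar N_{z,u}$ for the SSL case, where the extra $\bar N_{z,u}$ captures the irreducible loss contributed by the (possibly imperfect) labels propagated onto $\mathcal{D}_u$. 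The statement ``unlabeled data help to reduce discrimination'' I interpret as $\bar\gamma_{z,\text{ssl}}(\hat y) \le \bar\gamma_{z,\text{sl}}(\hat y)$ for every group $z\in\{0,1\}$, which in turn implies $\bar\Gamma_{\text{ssl}} \le \bar\Gamma_{\text{sl}}$ via the triangle inequality applied to Eq.~(32).

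First I would argue that the bias terms satisfy $\bar B_{z,\text{ssl}} \approx \bar B_{z,\text{sl}}$, since the model class ($\boldsymbol{w}$-parameterized LR or SVM) and the optimal predictor $y^{*}(x,z)$ are unchanged by the addition of unlabeled data; under the fairness constraint the main prediction $y_m$ is shifted toward the same fair direction in both settings, so bias cancels in the comparison. Second, I would invoke Lemma~2: the discrimination learning curve is an inverse power law in the effective training size $K$. In the discussion of Section~III.4 the algorithm uses the propagated $\boldsymbol{y_u}$ jointly with $\boldsymbol{y_l}$ to update $\boldsymbol{w}$, so the effective sample size grows from $K_l$ to $K_l+K_u$; by Lemma~2 and the definition in Eq.~(34), the variance contribution shrinks, giving $|\bar V_{z,\text{ssl}}(\hat y)| \le |\bar V_{z,\text{sl}}(\hat y)|$.

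Third, I would isolate the extra noise. Because the labels assigned to $\mathcal{D}_u$ through Eq.~(21)--(22) need not coincide with the Bayes labels, the noise decomposition acquires an additional non-negative term $\bar N_{z,u} = \mathbb{E}_D[c_n(x,z)L(y^*(x,z),y(x,z))\mid Z=z]$ restricted to the unlabeled subsample, while $\bar N_{z,l}$ is shared between SL and SSL and cancels. Subtracting SSL from SL therefore yields
\begin{equation}
\bar\gamma_{z,\text{sl}}(\hat y) - \bar\gamma_{z,\text{ssl}}(\hat y) \;=\; \bigl(|\bar V_{z}(\hat y)_{\text{sl}}| - |\bar V_{z}(\hat y)_{\text{ssl}}|\bigr) \;-\; \bar N_{z,u},
\end{equation}
and the hypothesis of the theorem forces this difference to be non-negative. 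Taking absolute differences over $z\in\{0,1\}$ and applying the triangle inequality to Eq.~(32) then gives $\bar\Gamma_{\text{ssl}} \le \bar\Gamma_{\text{sl}}$, which is the claim.

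The main obstacle I anticipate is the bias-invariance step: strictly speaking, the in-processing fairness constraints in Eqs.~(18) and~(26) depend on $\boldsymbol{y_u}$ (through $g_w$ for the disparate-mistreatment case, and implicitly through the weight update in the disparate-impact case), so the main predictor $y_m$ could drift between the SL and SSL regimes. I would handle this either by restricting the statement to the disparate-impact case (where $g_w = \boldsymbol{w}^\top X$ is label-free, so the feasible set of $\boldsymbol{w}$ is unchanged), or by absorbing any residual bias drift into the $\bar N_{z,u}$ term using the fact that $L(y^*,y_m)$ is bounded in the zero-one loss setting of Section~II.C. The variance-monotonicity step also needs Lemma~2 to hold separately per group $z$, which follows because the inverse-power-law argument of \cite{Chen2018} applies conditionally on $Z=z$.
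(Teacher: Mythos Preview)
Your proposal follows essentially the same three-part decomposition as the paper's proof: bias is unchanged between SL and SSL (same model class), variance shrinks by Lemma~2 (larger effective sample), and noise picks up the extra term $\bar N_{z,u}$ from label propagation while $\bar N_{z,l}$ cancels. The paper's argument is structured identically and arrives at the same sufficient condition.

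The one place where you diverge is the aggregation step. You work at the per-group level $\bar\gamma_z$ and then claim that $\bar\gamma_{z,\text{ssl}} \le \bar\gamma_{z,\text{sl}}$ for each $z$ yields $\bar\Gamma_{\text{ssl}} \le \bar\Gamma_{\text{sl}}$ ``via the triangle inequality applied to Eq.~(32)''. That implication is false in general: if $\bar\gamma_0$ drops by more than $\bar\gamma_1$ does, the gap $|\bar\gamma_0 - \bar\gamma_1|$ can widen even though both components shrink. The paper sidesteps this by working directly at the group-difference level through Eq.~(32), comparing $\bar B_0 - \bar B_1$, $\bar V_0 - \bar V_1$, and $\bar N_0 - \bar N_1$ between SL and SSL; note in particular that the paper \emph{defines} $\bar N_{z,u} = |\bar N_{1,u} - \bar N_{0,u}|$ as a group difference, not a per-group quantity. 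To repair your argument, drop the per-group $\bar\gamma_z$ comparison and instead compare the three difference terms in Eq.~(32) directly; your bias and variance reasoning carries over verbatim to those differences, and the noise step then matches the paper's Eq.~(41) exactly.
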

\begin{proof}
	To prove Theorem 1, the goal is to prove that the discrimination level in SSL $\bar{\Gamma}_{ssl}$ is lower than the discrimination level in supervised learning $\bar{\Gamma}_{sl}$, i.e., $ \bar{\Gamma}_{ssl} \leq \bar{\Gamma}_{sl}  $. In the following, we compare the discrimination in supervised learning and SSL in terms of $\emph{discrimination in bias}$ $\bar{B_z} (\hat{y})_{ssl}$, $\emph{discrimination in variance}$ $\bar{V_z} (\hat{y})_{ssl}$, $ \emph{and discrimination in noise} $ $\bar{N}_{z,ssl}$.
	
	\textbf{Discrimination in bias:}
	Bias can measure the fitting ability of the algorithm itself, and represent the accuracy of the model. Hence, bias in discrimination $ \bar{B_z} (\hat{y}) =\mathbbm{E}_{D} [B(y_{m},x,z)|Z=z]  $ only depends on the model. When the labeled dataset and mixed dataset are trained with the same model for both supervised and SSL settings, this can be expressed as $ | \bar{B} (\hat{y})_{sl}| - | \bar{B} (\hat{y})_{ssl}| = 0 $.
	
	\textbf{Discrimination in variance:}
	$ \textbf{Lemma 2} $ states that the discrimination level $ \bar{\Gamma}(\hat{y},n) $ decreases as the size of training data $ n $ increases. This means that discrimination in variance $ \bar{V_{z}}(f) $ can be lessened by more unlabeled data. 
	In our framework, the classifier is trained with labeled data and unlabeled data which is marked labels through label propagation. The size of the mixed training dataset $ K $ is larger than the size of the labeled training dataset $ K_{l} $. 
	Hence, we can conclude that $ | \bar{V}_{z} (\hat{y})_{ssl}| - | \bar{V}_{z} (\hat{y})_{sl}| \leq  0$.
	
	\textbf{Discrimination in noise:}
	Unlabeled data introduces discrimination in noise because predicting labels for unlabeled data via label propagation contains discrimination in noise. In our method, the fairness constraint $ c $ is related to the noise level in unlabeled data. A smaller $ c $ enables that label propagation is towards in a fairer direction. This means that smaller $ c $ brings smaller discrimination in noise. To analyze discrimination in noise in SSL, we divide it into discrimination in noise in labeled data $\bar{N}_{z,l} $ and discrimination in noise in unlabeled data $ \bar{N}_{z,u} $, which is expressed as,
	\begin{equation}
	\bar{N}_{z,ssl} = \bar{N}_{z,l} +\bar{N}_{z,u}
	\end{equation}
	Discrimination in noise in labeled data $ \bar{N}_{z,l} $ is the same as the discrimination in noise in supervised learning.  
	Discrimination in noise in unlabeled data may stem from four cases of mislabeled unlabeled data during label propagation,
	\begin{align}
	& \bar{N}_{y=0,z=0}=\mathbbm{E}_{D_{un}}[\hat{y}^*_{p}=1|y=0,z=0] \\
	& \bar{N}_{y=0,z=1}=\mathbbm{E}_{D_{un}}[\hat{y}^*_{p}=1|y=0,z=1] \\
	& \bar{N}_{y=1,z=0}=\mathbbm{E}_{D_{un}}[\hat{y}^*_{p}=0|y=1,z=0] \\
	& \bar{N}_{y=1,z=1}=\mathbbm{E}_{D_{un}}[\hat{y}^*_{p}=0|y=1,z=1] 
	\end{align}
	\noindent where $ \hat{y}_{p}^* $ is the optimal predicted label of unlabeled data via label propagation. With the `missing' labels filled in, the noise can be separated into the protected group $ \bar{N}_{1,u}=\bar{N}_{y=0,z=1} + \bar{N}_{y=1,z=1} $ and the unprotected group $ \bar{N}_{0,u}=\bar{N}_{y=0,z=0} + \bar{N}_{y=1,z=0} $. 
	In our framework, discrimination in  $ \bar{N}_{z,u} $ is controlled by the fairness constraint $ c $. A smaller $ c $ generates a smaller $ \bar{N}_{z,u} $.
	Hence,  $ \bar{N}_{z,u} $  can be adjusted by $ c $ and can be measured with,
	\begin{align}
	& \bar{N}_{z,u} = |\bar{N}_{1,u} - \bar{N}_{0,u}|
	\end{align}
	From the analysis above, we conclude that when  $ | \bar{V}_{z} (\hat{y})_{ssl} -    \bar{V}_{z} (\hat{y})_{sl}|- \bar{N}_{z,u} \geq 0$,  unlabeled data is able to reduce discrimination with our proposed method, which results in a better trade-off between accuracy and discrimination. Unlabeled data do not change discrimination in bias, reduce discrimination in variance and increase discrimination in noise. 
\end{proof}
Comparing with the discrimination analysis in \cite{zhang2020fairness} in SSL, discrimination in variance is reduced by unlabeled data and ensemble learning, and discrimination in noise is reduced by ensemble learning. Our proposed method can reduce discrimination in variance by unlabeled data and reduce discrimination in noise by the fairness constraint. The advantage of our method is that the discrimination in noise is controllable with the threshold $ c $ in the fairness constraint. \cite{zhang2020fairness} and our proposed method are complementary work to explore how to reduce the discrimination in noise that unlabeled data may bring.


\section{Experiment}
In this section, we first describe the experimental setup, including datasets, baselines, and parameters. Then, we evaluate our method on three real-world datasets under the fairness metric of disparate impact and disparate mistreatment (including OMR, FNR and FPR).
The aim of our experiments is to assess: the effectiveness of our method to achieve fair semi-supervised learning; the impact of different fairness constraints on fairness; and the extent to which unlabeled can balance fairness with accuracy. 
\subsection{Experimental Setup}
\subsubsection{Dataset}
Our experiments involve three real-world datasets: Health  dataset \footnote{https://foreverdata.org/1015/index.html}, Titanic dataset \footnote{https://www.kaggle.com/c/titanic/data} and Bank dataset \footnote{https://archive.ics.uci.edu/ml/datasets/bank+marketing}. 
\begin{itemize}
	\item The task in the Health dataset is to predict whether people will spend time in the hospital. In order to convert the problem into the binary classification task, we only  predict whether people will spend any day in the hospital. 	After data preprocessing, the dataset contains 27,000 data points with 132 features.
	We divide patients into two groups based on age ($ \geq $65 years)  and consider 'Age' to be the sensitive attribute.
	\item The Bank dataset contains a total of 41,188 records with 20 attributes and a binary label, which indicates whether the client has subscribed (positive class) or not (negative class) to a term deposit. We consider 'Age' as sensitive attribute.
	\item  The Titanic dataset comes from a Kaggle competition where the goal is to analyze which sorts of people were likely to survive the sinking of the Titanic. We consider "Gender" as the sensitive attribute. After data preprocessing, we extract 891 data points with 9 features.
\end{itemize}

\subsubsection{Parameters}
The sensitive attributes are excluded from the training set to ensure fairness between groups and are only used to evaluate discrimination in the test phrases.
In the Health, Bank and Titanic datasets, data are all labeled. 
In the Health dataset, we sample 4000 data points as labeled dataset, 4000 data points as test dataset, and left as unlabeled dataset. In the Bank dataset, we sample 4000 data points as labeled dataset, 4000 data points as test dataset, and left as unlabeled dataset.  In the Titanic dataset, we sample 200 data points as labeled dataset, 200 data points as test dataset, and left as unlabeled dataset. 
Therefore, $\mathcal{D}_{l}$ and $\mathcal{D}_{u} $ are collected from the similar data distribution. 

In the experiments, the results are an average of 10 results by randomly sampling labeled dataset, test dataset and unlabeled dataset.  
We set $ \alpha=1 $ and $ T= 0.5 $ in all datasets; $ \sigma =0.5  $ in the Health dataset and Bank dataset, and  $ \sigma =0.1  $ in the Titanic dataset.  
In DCCP, parameter $ \tau $ is set to $0.05  $ and $ 1 $ in the Bank and Titanic dataset, respectively. Parameter $ \mu $ is set to $1.2  $ in both of Bank and Titanic datasets.

\subsubsection{Baseline Methods}
The methods chosen for comparison are listed as follows. 
It is worth to note that \cite{NIPS2019_9437} also used unlabeled data on fairness. However, they only applied the equal opportunity metric, which is different to ours. Hence, we did not compare the proposed method with them.
\begin{itemize}
	\item Fairness Constraints (FS): Fairness constraints are used to ensure fairness for classifiers. \cite{Zafar2017}
	\item Uniform Sampling (US): The number of data points in each groups is equalized  through oversampling and/ undersampling.  \cite{Kamiran2012}
	\item Preferential Sampling (PS): The number of data points in each groups is equalized by taking samples near the borderline data points. \cite{Kamiran2012}
	\item Fairness-enhanced sampling (FES): A fair SSL framework includes pseudo labeling, re-sampling and ensemble learning. \cite{zhang2020fairness} 
\end{itemize}

\subsection{Experimental Results of disparate impact}

\begin{figure}[ht]
	\begin{minipage}[b]{0.49\linewidth}
		\centering	
		\includegraphics[scale=0.24]{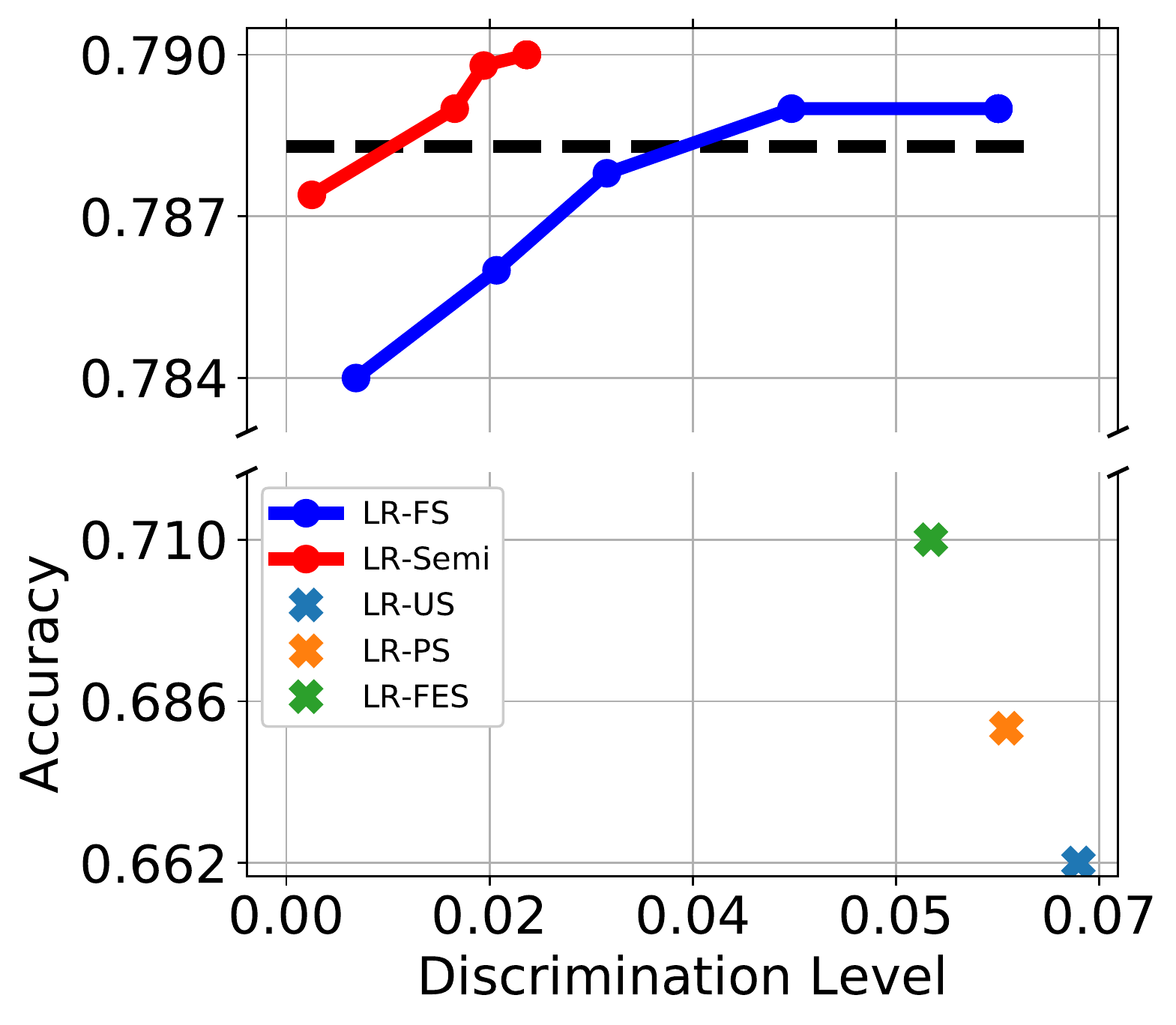}
		\centerline{(a) LR-Health}
	\end{minipage}
	\begin{minipage}[b]{0.49\linewidth}
		\centering	
		\includegraphics[scale=0.24]{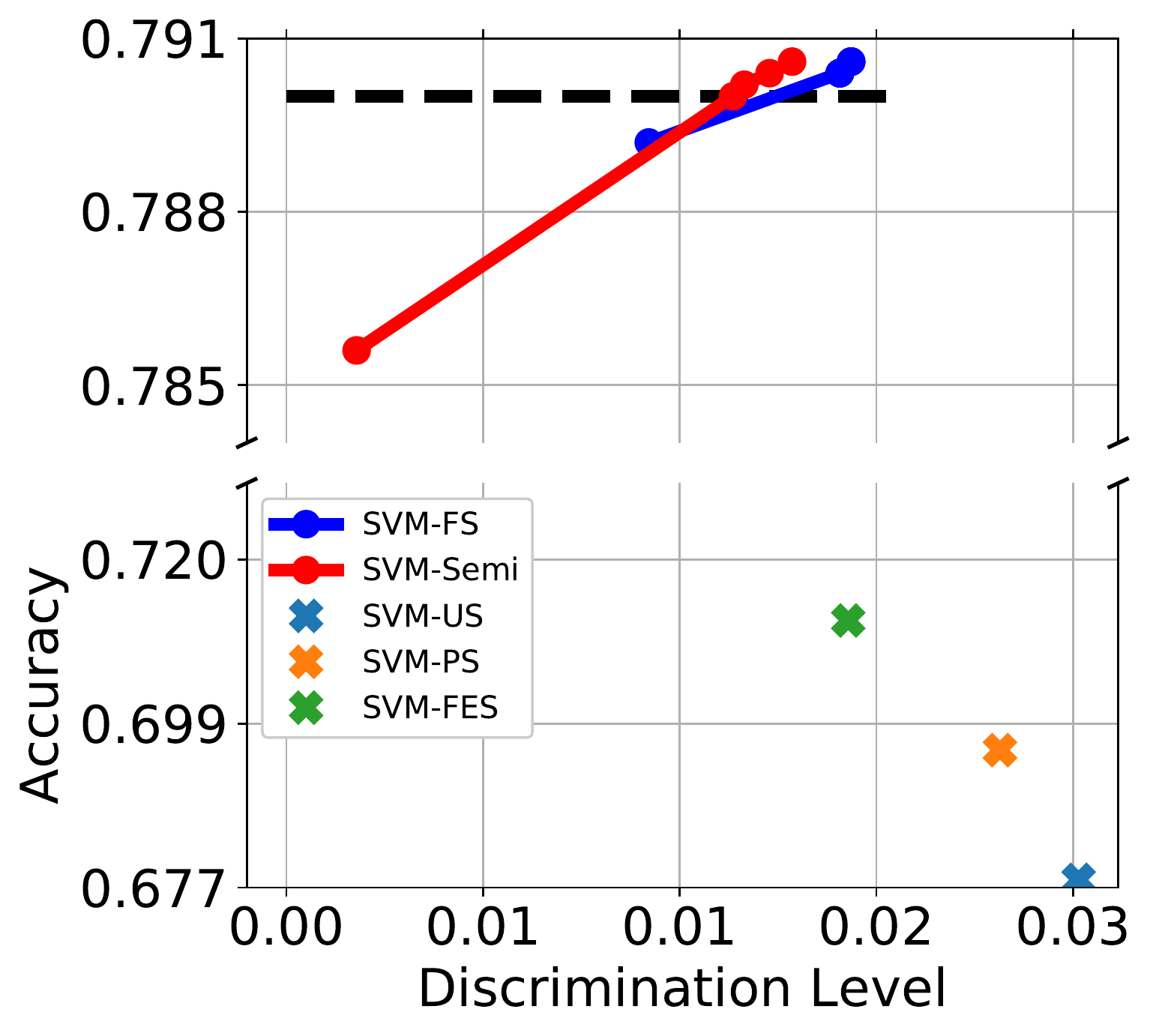}
		\centerline{(b) SVM-Health}
	\end{minipage}
	\begin{minipage}[b]{0.49\linewidth}
		\centering	
		\includegraphics[scale=0.24]{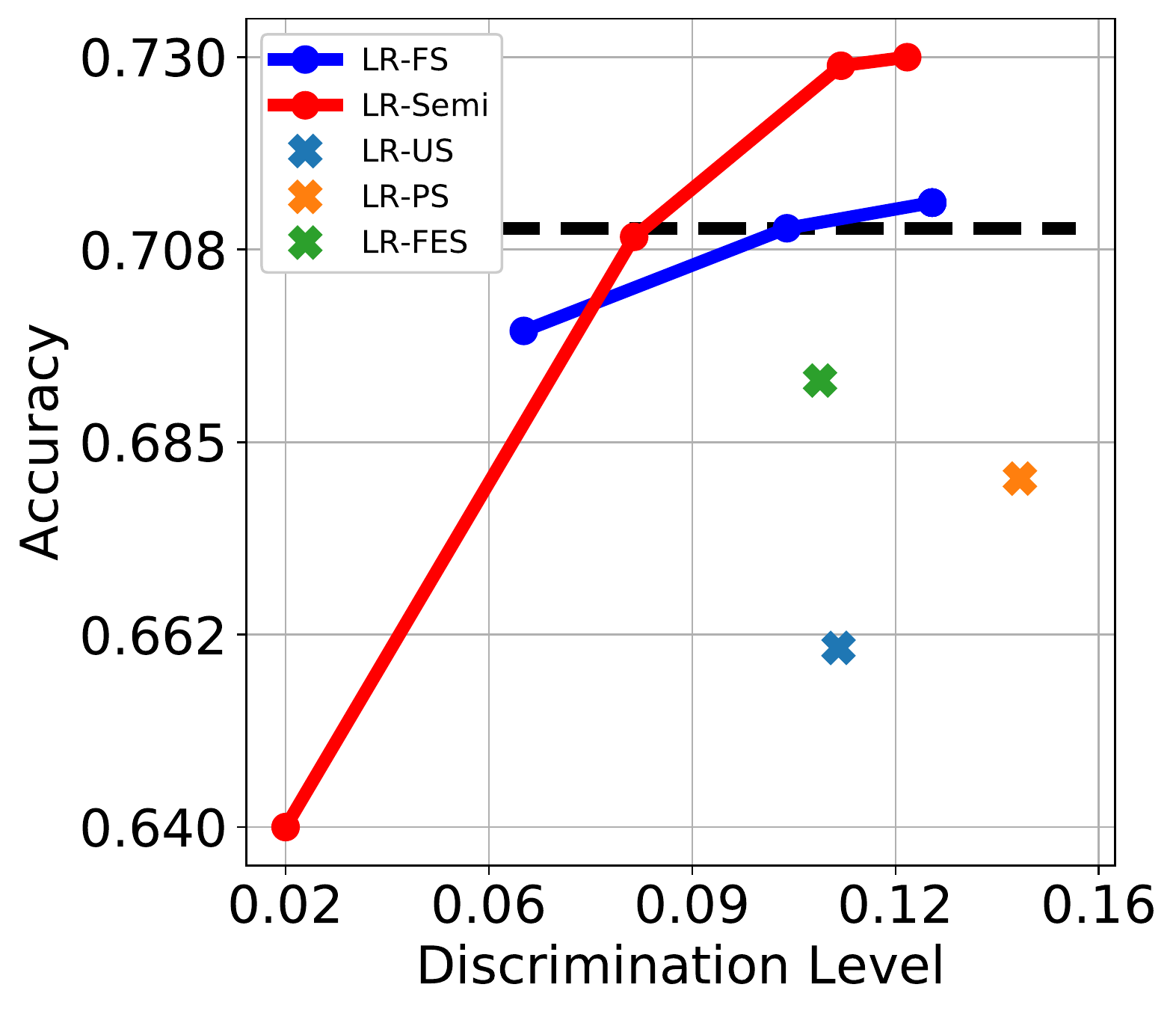}
		\centerline{(c) LR-Titanic}
	\end{minipage}
	\begin{minipage}[b]{0.49\linewidth}
		\centering	
		\includegraphics[scale=0.24]{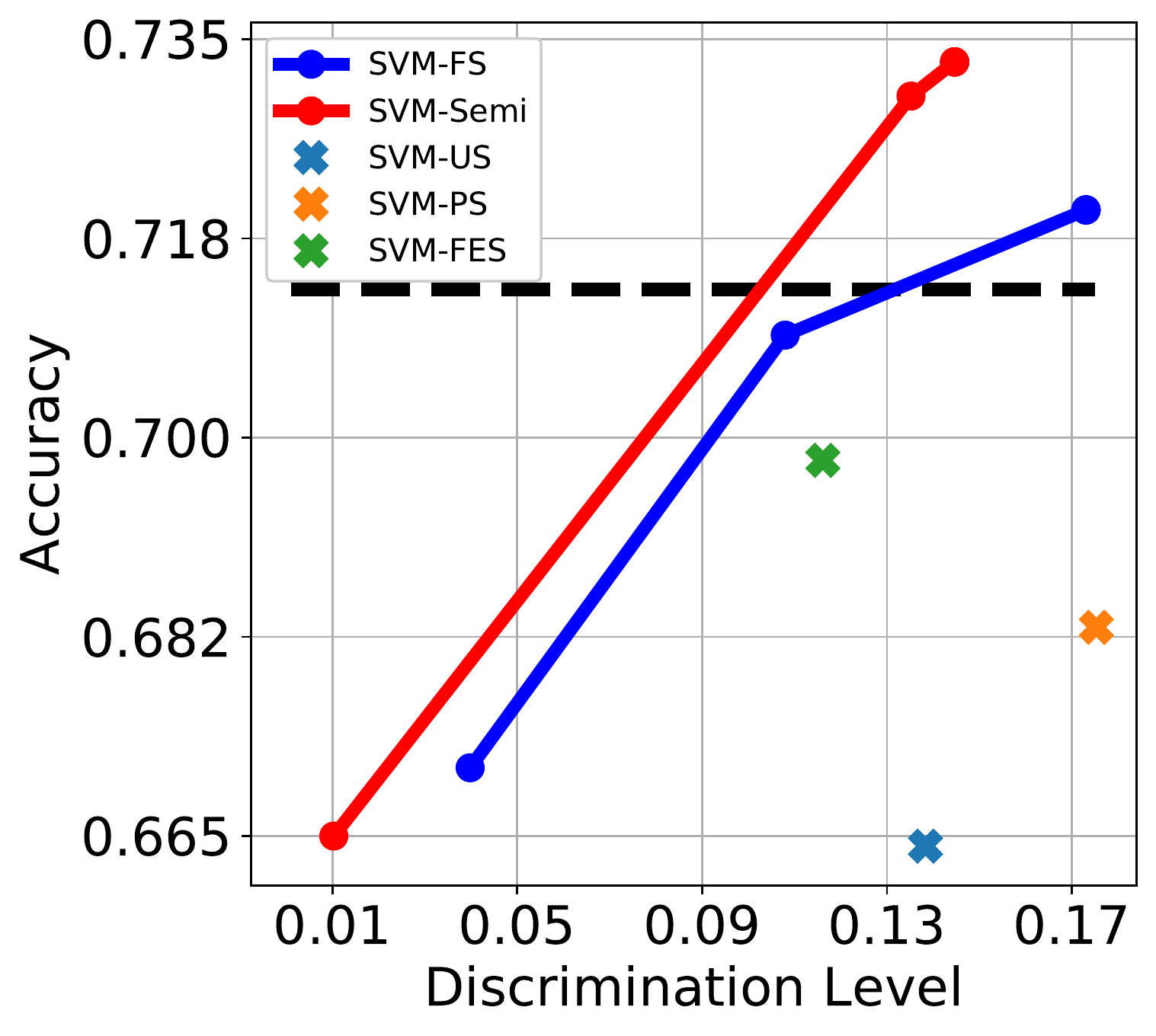}
		\centerline{(d) SVM-Titanic}
	\end{minipage}
	\caption{The trade-off between accuracy and discrimination in proposed method Semi (Red), FS (Blue), US (Blue cross), PS (Yellow cross) and FES (Green cross) under the fairness metric of disparate impact with LR and SVM in two datasets. As the threshold of covariance $ c $ increases, accuracy and discrimination increase. The results demonstrate that and our method achieves a better trade-off between accuracy and discrimination than other methods. }
\end{figure}

\subsubsection{Trade-off Between Accuracy and Discrimination}
Figure 1 shows that as $ c $ varies, accuracy and discrimination level in the proposed method and other methods with LR and SVM on two datasets. From the results, we can observe that our framework provides the better trade-off between accuracy and discrimination. A better trade-off means that with the same accuracy, discrimination is low or with the same discrimination, accuracy is higher.
For example, at the same level of accuracy on the Titanic dataset, (shown by the black line),  our method with LR has a discrimination level of around $ 0.08 $, while FS method has a discrimination level of $ 0.11 $. A similar observation can be made from the results with PS method (Yellow cross), US method (Blue cross) and FES method (Green cross).
Note that the discrimination level (red line) with LR in the Health dataset does not extend because discrimination does not increase as $ c $ grows.
Additionally, we note that accuracy and discrimination level are related to the training models. In the Titanic dataset, LR has a lower accuracy and discrimination than SVM and the choice of training models is related to the datasets.
\begin{table}[]
	\scalebox{0.78}{
		\begin{tabular}{lcccccccc} \hline
			Dataset    & \multicolumn{7}{c}{Health dataset}                                                                      &             \\ \hline
			Constraint & \multicolumn{2}{l}{Labeled} & \multicolumn{2}{l}{Unlabeled} & \multicolumn{2}{l}{Combined} & \multicolumn{2}{l}{Mixed} \\ \hline
			& Acc          & Dis          & Acc           & Dis           & Acc           & Dis          & Acc         & Dis         \\             \hline
			c=0.0      & 0.7868       & 0.0042       & N/A           & N/A           & {0.7874}        & 0.0022       & 0.7862      & {0.0003}      \\ 
			c=0.1      & 0.7890       & {0.0129}       & N/A           & N/A           & 0.7890        & 0.0145       &{0.7892}      & 0.0149      \\
			c=0.2      &{0.7900}       & 0.0170       & 0.7900        & 0.0207        & 0.7898        & {0.0170}       & 0.7898      & 0.0170      \\
			c=0.3      & 0.7898       & 0.0207       & 0.7898        & 0.0170        & 0.7900        & 0.0207       & {0.7900}      & {0.0207}      \\
			c=0.4      & {0.7902}       & 0.0178       & 0.7898        & {0.0170}        & 0.7900        & 0.0207       & 0.7900      & 0.0207      \\
			c=0.5      & 0.7900       & 0.0207       & 0.7900        & 0.0207        & 0.7900        & 0.0207       & {0.7900}      & {0.0207}      \\
			c=0.6      & 0.7900       & 0.0207       & {0.7906}        & {0.0186}        & 0.7900        & 0.0207       & 0.7900      & 0.0207      \\
			c=0.7      & 0.7900       & 0.0207       & 0.7900        & 0.0207        & 0.7900        & 0.0207       & {0.7900}      & {0.0207}      \\
			c=0.8      & 0.7900       & 0.0207       & {0.7904}        & {0.0191}        & 0.7900        & 0.0207       & 0.7900      & 0.0207      \\
			c=0.9      & 0.7900       & 0.0207       & {0.7908}        & {0.0190}        & 0.7900        & 0.0207       & 0.7900      & 0.0207      \\
			c=1.0      & 0.7900       & 0.0207       & 0.7900        & 0.0207        & 0.7900        & 0.0207       & {0.7900}      & {0.0207}     \\ \hline
		\end{tabular}%
	}
	\caption{The impact of fairness constraints on different datasets in terms of accuracy (Acc) and discrimination level (Dis) under the fairness metric of disparate impact with LR in the Health dataset.}
	\label{tab:my-table}
\end{table}

\begin{table}[]
	\scalebox{0.78}{
		\begin{tabular}{lcccccccc}
			\hline
			Dataset    & \multicolumn{8}{c}{Titanic dataset}                                                                                   \\ \hline
			Constraint & \multicolumn{2}{c}{Labeled} & \multicolumn{2}{c}{Unlabeled} & \multicolumn{2}{c}{Combined} & \multicolumn{2}{c}{Mixed} \\ \hline
			& Acc      & Dis          & Acc          & Dis           & Acc           & Dis           & Acc         & Dis         \\ \hline
			c=0.0      & 0.6330       & {0.0128}       & {0.6970}        & 0.1244        & 0.6290        & 0.0139       & 0.6440      & 0.0402      \\
			c=0.05     & 0.6690       & {0.0579}       & {0.7070}        & 0.1265        & 0.6690        & 0.0716       & 0.6810      & 0.0948      \\
			c=0.1      & 0.7150       & 0.1272       & 0.7140        & 0.1332        & 0.7100        & 0.1239       & {0.7150}      & {0.1256}      \\
			c=0.15     & 0.7200       & 0.1366       & 0.7190        & 0.1336        & 0.7190        & 0.1336       & {0.7200}      & {0.1366}      \\
			c=0.2      & 0.7200       & 0.1366       & 0.7200        & 0.1366        & 0.7200        & 0.1366       & {0.7200}      & {0.1366}      \\
			c=0.25     & 0.7200       & 0.1366       & 0.7200        & 0.1366        & 0.7200        & 0.1366       & {0.7200}      & {0.1366}    \\ \hline
		\end{tabular}%
	}
	\caption{The impact of fairness constraints on different datasets in terms of accuracy (Acc) and discrimination level (Dis) under the fairness metric of disparate impact with LR in the Titanic dataset.}
	\label{tab:my-table}
\end{table}


\subsubsection{Different Fairness Constraints}
Our next set of experiments is to determine the impact of different fairness constraints. For these tests, the size of unlabeled data is set to 12,000 data points in the Health dataset and 400 data points in the Titanic dataset. Due to space limitation, we have only reported the results for the LR, which appear in Tables 1 and 2. The result show that, when  varying the threshold of covariance $ c $,  different fairness constraints on labeled and unlabeled data have different impacts on the training results.
As the threshold of covariance increases, both accuracy and discrimination level increase before steadying off for the duration. In terms of accuracy, this is because a larger $ c $ allows for a larger space to find better weights $ \boldsymbol{w} $  to inform classification. In terms of discrimination, a larger $ c $ tends to introduce more discrimination in noise.

It is also observed that the fairness constraint on mixed data generally has the best performance in the trade-off between accuracy and discrimination. Other three constraints have very similar accuracy and discrimination levels. We attribute this to the assumption that labeled and unlabeled data have the similar data distribution, and therefore the mixed fairness constraint on labeled and unlabeled data gives the best description of the covariance between sensitive attributes and signed distance from feature vectors to the decision boundary. 
\begin{figure}[ht]
	\begin{minipage}[b]{0.49\linewidth}
		\centering	
		\includegraphics[scale=0.22]{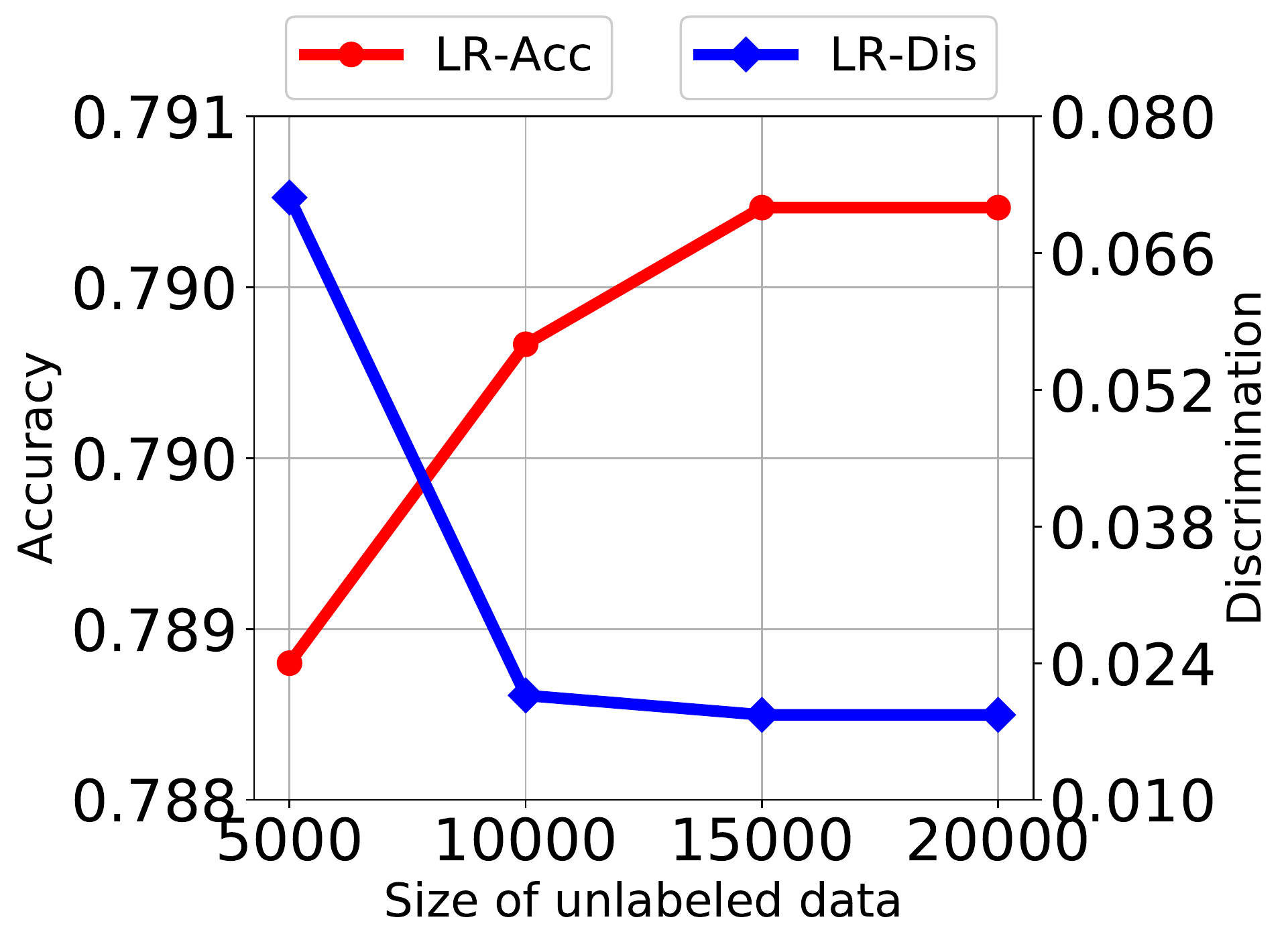}
		\centerline{(a) LR-Health}
	\end{minipage}
	\begin{minipage}[b]{0.49\linewidth}
		\includegraphics[scale=0.22]{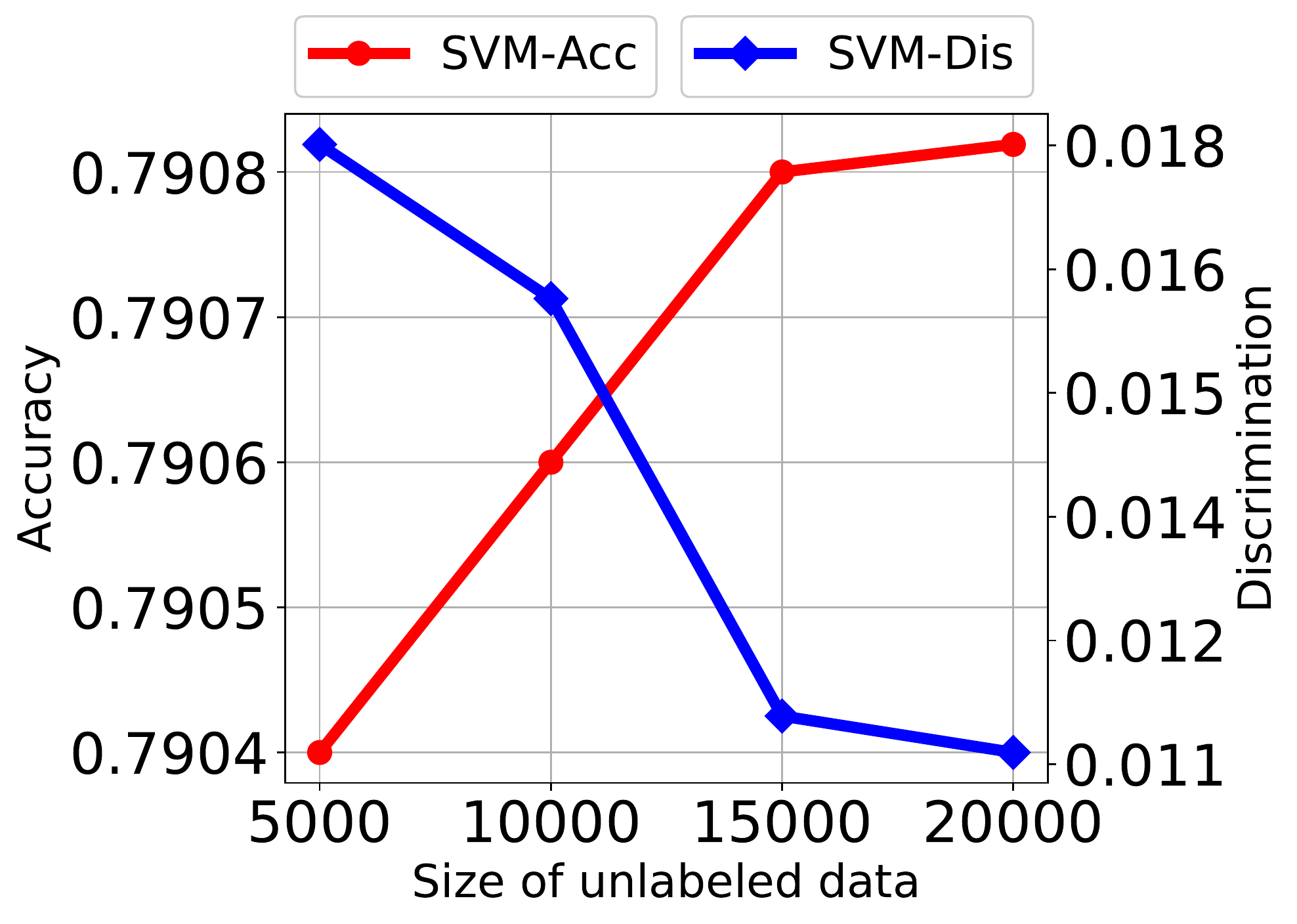}
		\centerline{(b) SVM-Health}
	\end{minipage}
	\begin{minipage}[b]{0.49\linewidth}
		\includegraphics[scale=0.22]{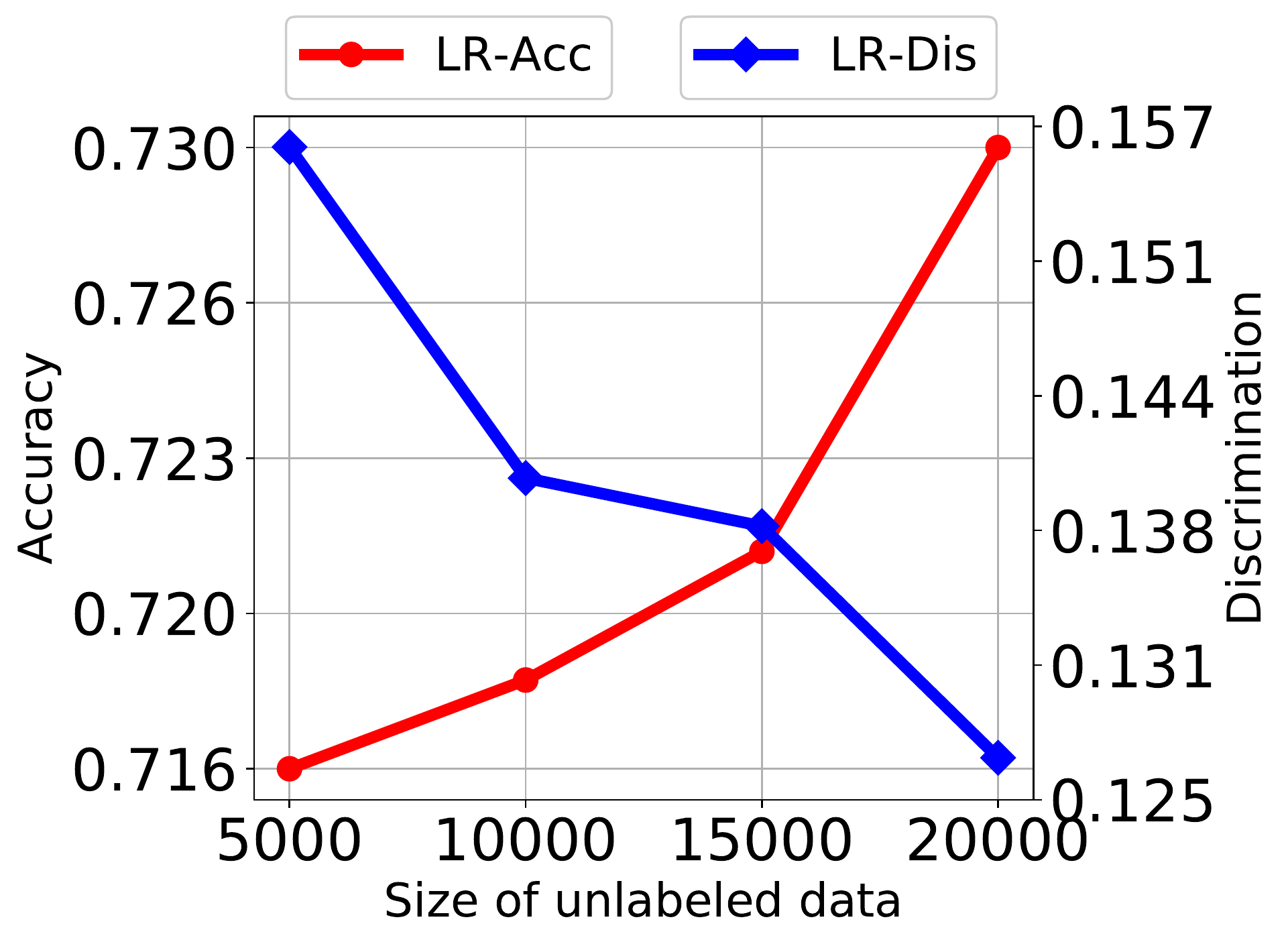}
		\centerline{(c) LR-Titanic}
	\end{minipage}
	\begin{minipage}[b]{0.49\linewidth}
		\includegraphics[scale=0.22]{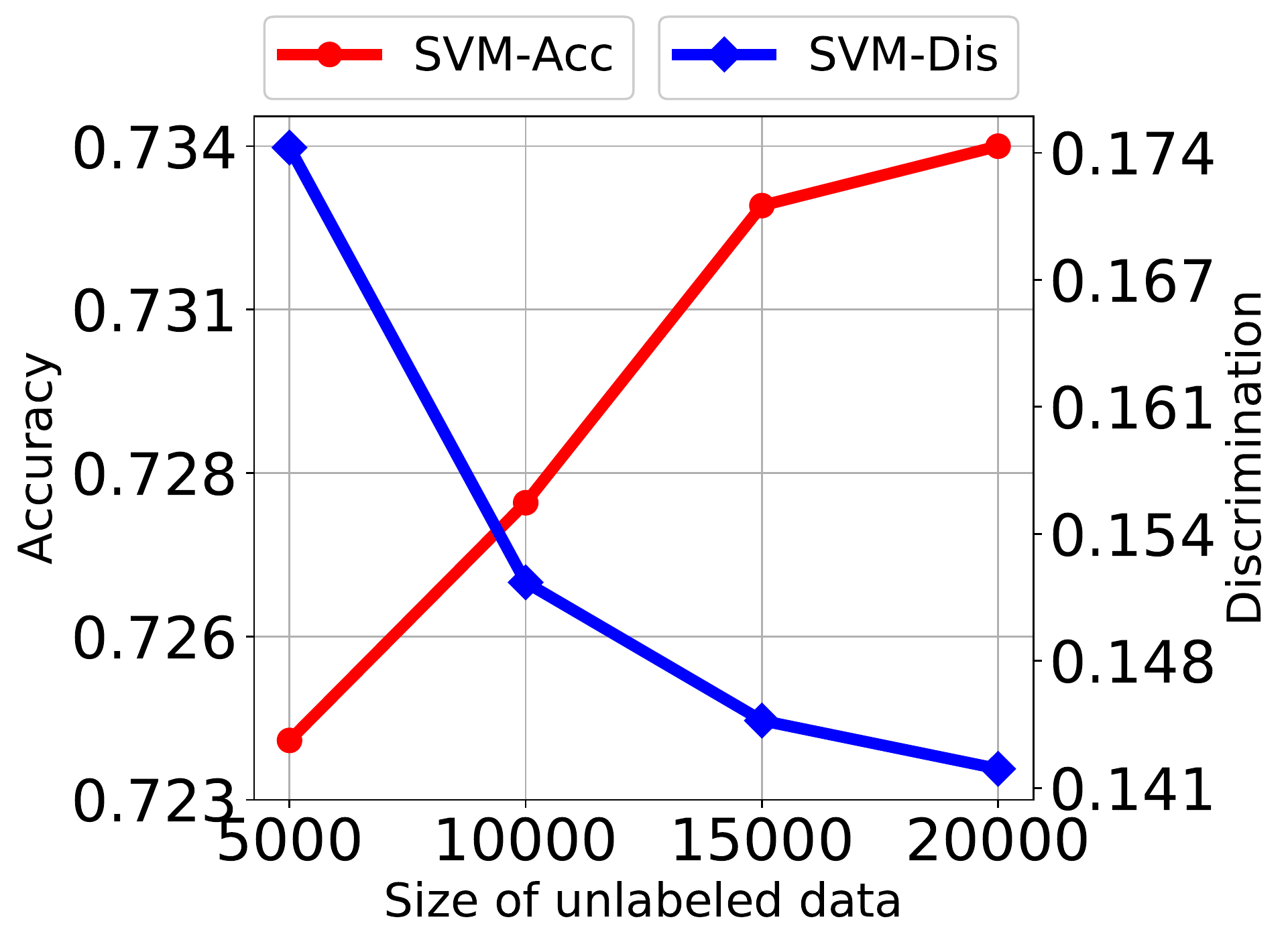}
		\centerline{(d) SVM-Titanic}
	\end{minipage}
	\caption{The impact of the amount of unlabeled data in the training set on accuracy (Red) and discrimination level (Blue) under the fairness metric of disparate impact with LR and SVM in two datasets. The X-axis is the size of unlabeled dataset; left y-axis is accuracy; and right y-axis is discrimination level.}
	
\end{figure}
\subsubsection{The Impact of Unlabeled Data}
For these experiments, we set the covariance threshold $ c=1 $ for the Health and Titanic datasets.
Figure 2 shows that accuracy and discrimination level varies with the amount of unlabeled data. This applies to both the LR and SVM classifiers on both datasets.
As shown, accuracy increases as the amount of unlabeled data increases in both datasets before stabilizing at its peak. Discrimination level sharply decreases almost immediately, then also stabilize. These results clearly demonstrate that discrimination in variance decreases as the amount of unlabeled data in the training set increases.

\subsection{Experimental Results of Disparate Mistreatment}
\subsubsection{Trade-off Between Accuracy and Discrimination}
\begin{figure}[ht]
	\begin{minipage}[b]{0.49\linewidth}
		\centering	
		\includegraphics[scale=0.24]{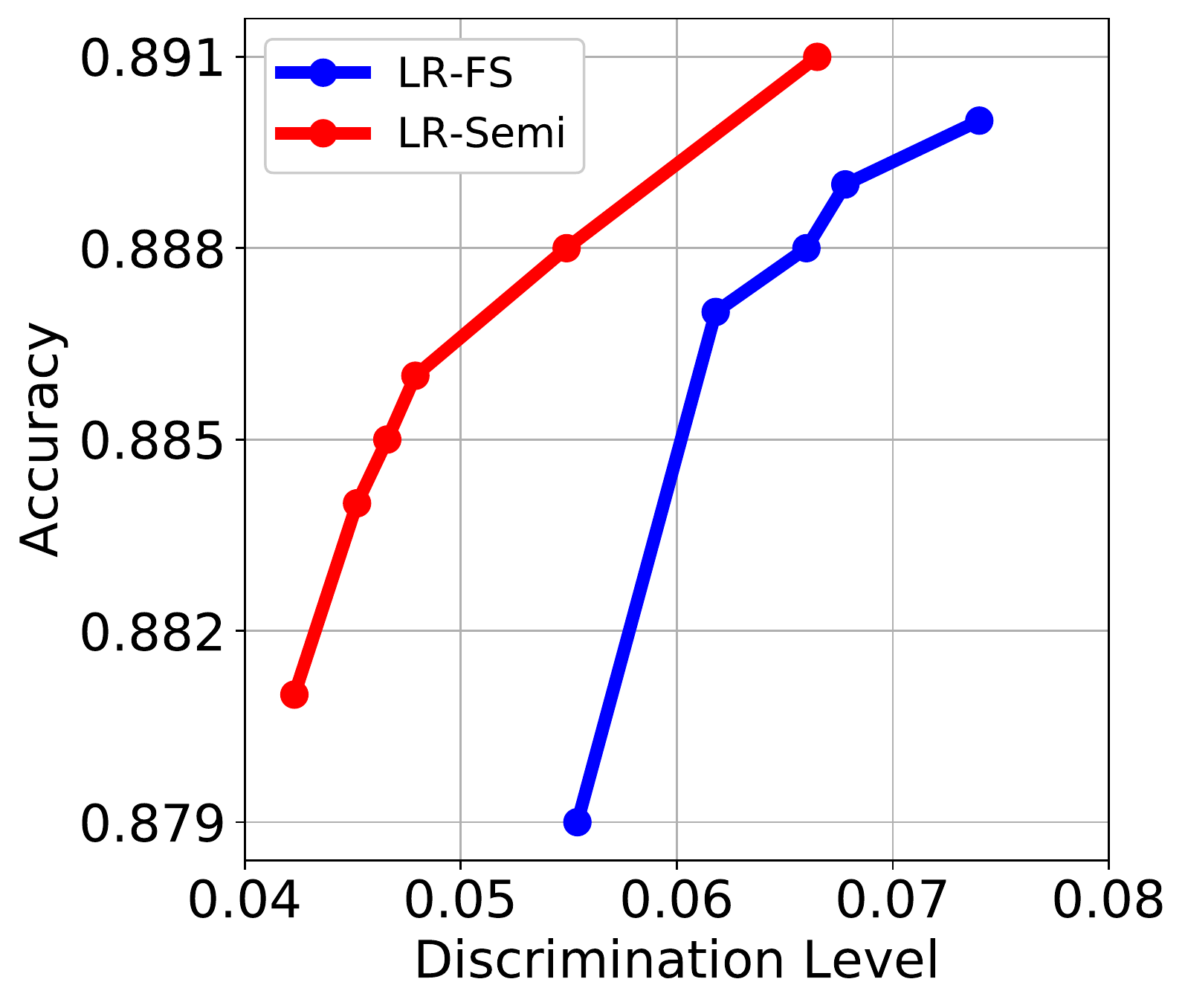}
		\centerline{(a) LR-Bank-OMR}
	\end{minipage}
	\begin{minipage}[b]{0.49\linewidth}
		\centering
		\includegraphics[scale=0.24]{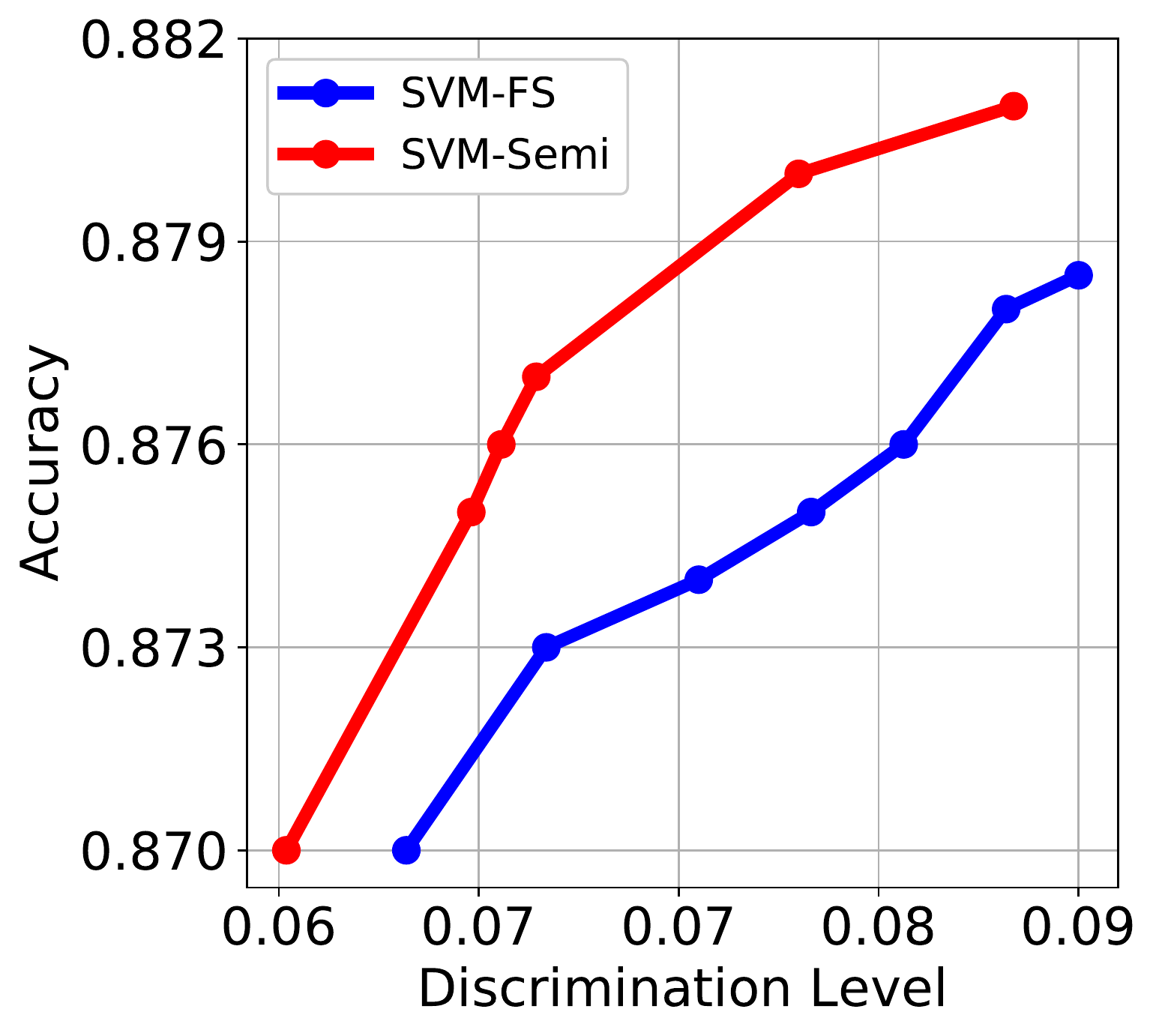}
		\centerline{(b) SVM-Bank-OMR}
	\end{minipage}
	\begin{minipage}[b]{0.49\linewidth}
		\centering
		\includegraphics[scale=0.24]{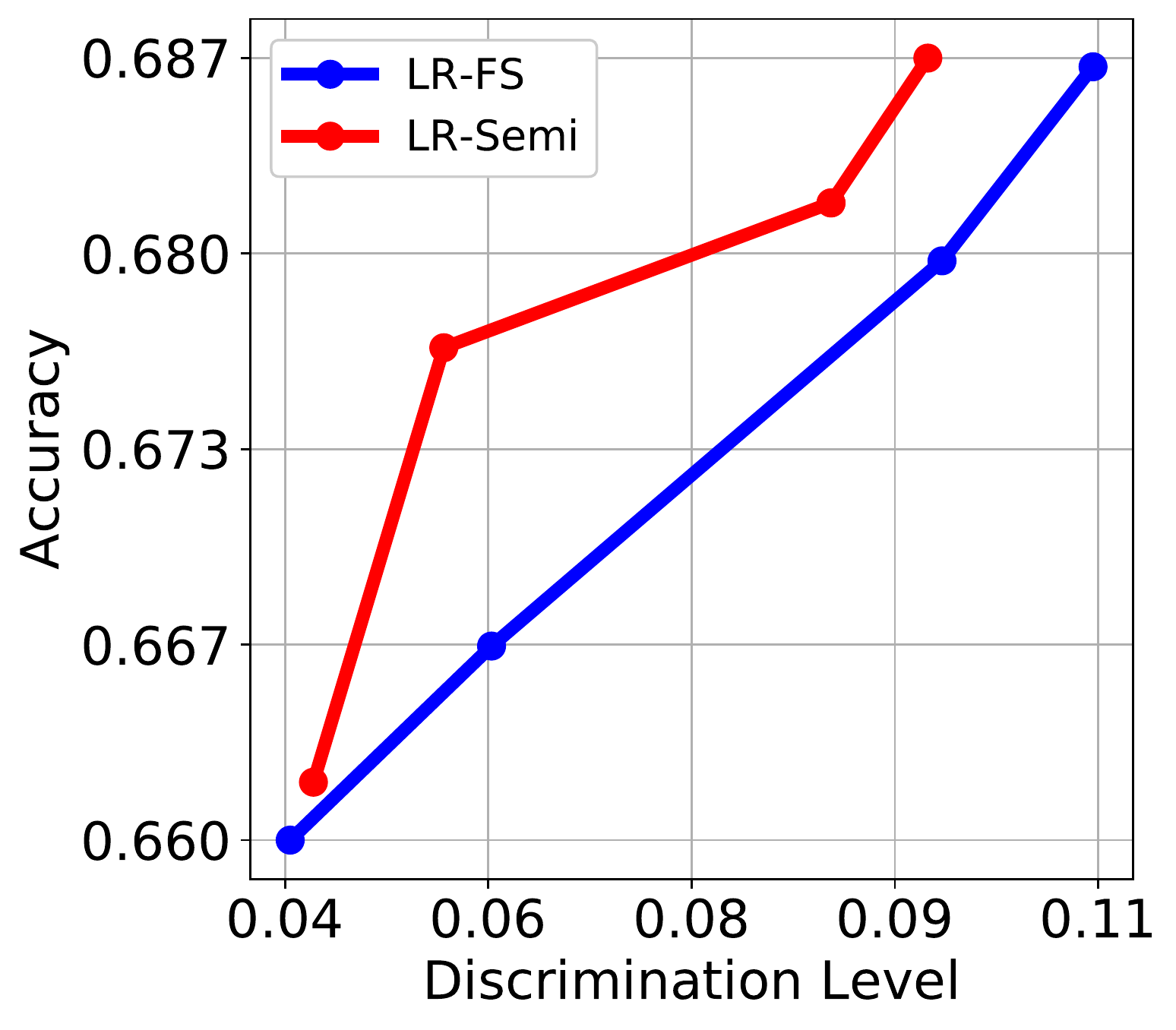}
		\centerline{(c) LR-Titanic-OMR}
	\end{minipage}
	\begin{minipage}[b]{0.49\linewidth}
		\centering
		\includegraphics[scale=0.24]{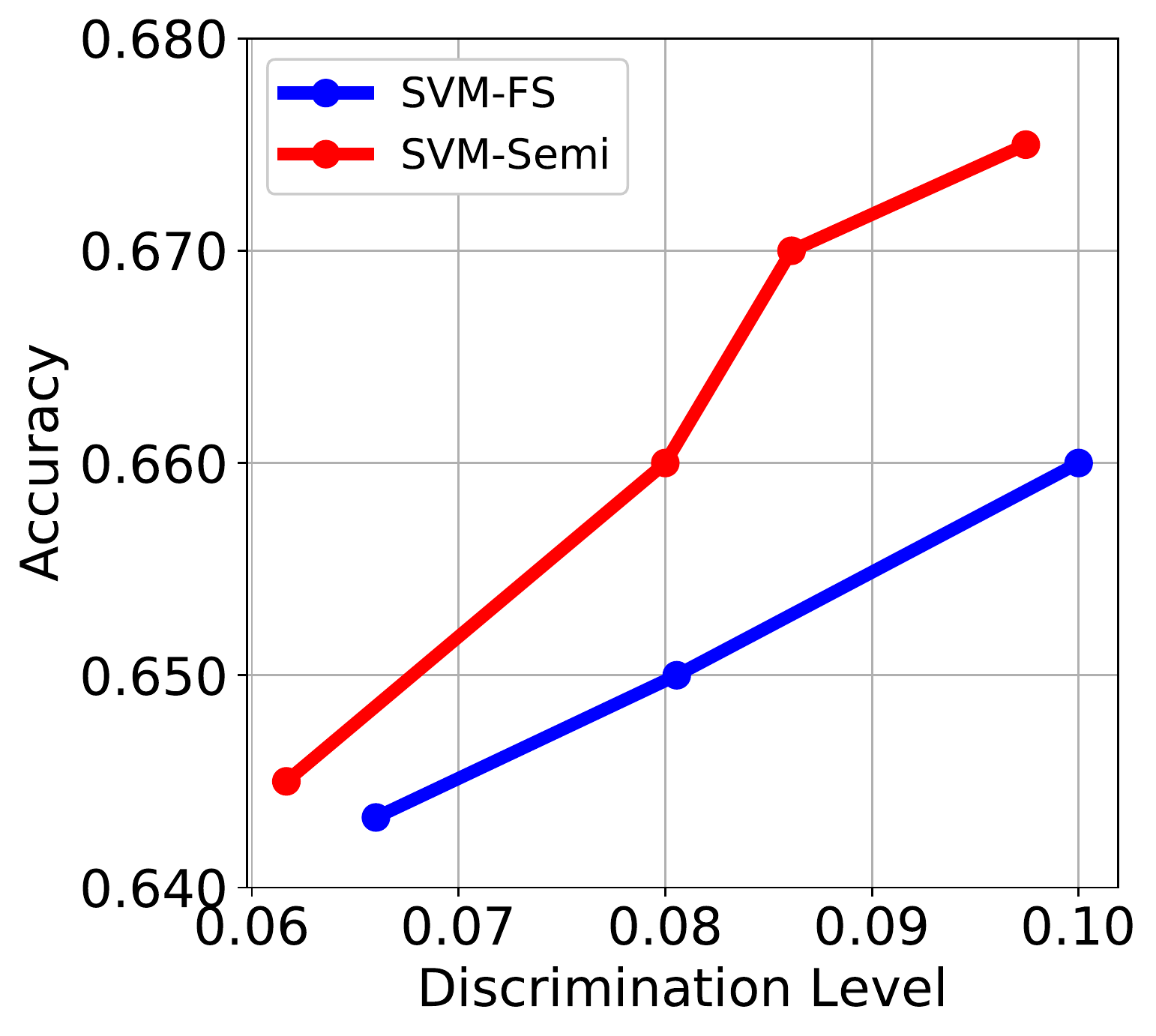}
		\centerline{(d) SVM-Titanic-OMR}
	\end{minipage}
	\caption{The trade-off between accuracy and discrimination in proposed method Semi (Red), FS (Blue) with LR and SVM in two datasets under the metric of overall misclassification rate. As the threshold of covariance $ c $ increases, accuracy and discrimination increase. The results demonstrate that our method using unlabeled data achieves a better trade-off between accuracy and discrimination.}	
\end{figure}
Figures 3-5 show that as $ c $ varies, accuracy and discrimination level in the proposed framework and the FS method with LR and SVM on two datasets under the fairness metric of OMR, FPR and FNR. From the results, we can observe that our proposed method (Red line) generally is in the left above the FS method (Blue line).
This indicates that our framework provides the better trade-off between accuracy and discrimination in three metrics for the most time. 
For example, at the same level of accuracy (Acc = $ 0.885 $) on the Bank dataset under OMR, our method with LR has a discrimination level of around $0.045$, while FS method has a discrimination level of $ 0.06 $. We also observe that discrimination level is quite different under fairness metrics. For example, discrimination level can reach $ 0.17 $ at the end under FNR, while discrimination level only shows $ 0.01 $ under FPR.
In addition, we note that accuracy and discrimination level have different performance on training models. In the Bank dataset, SVM generally has a lower accuracy and discrimination than LR.
\begin{figure}[ht]
	\begin{minipage}[b]{0.49\linewidth}
		\centering	
		\includegraphics[scale=0.24]{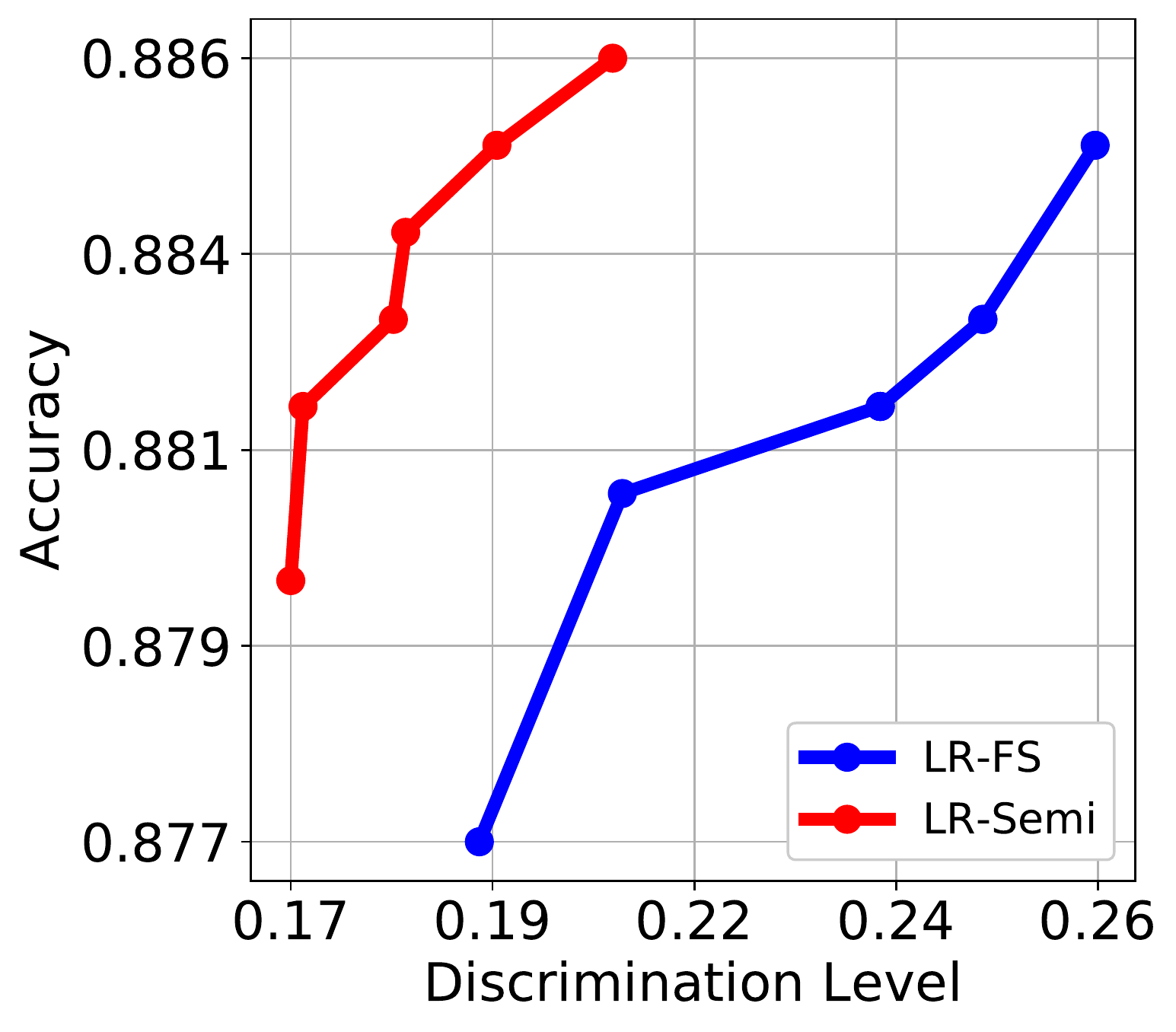}
		\centerline{(a) LR-Bank-FNR}
	\end{minipage}
	\begin{minipage}[b]{0.49\linewidth}
		\centering
		\includegraphics[scale=0.24]{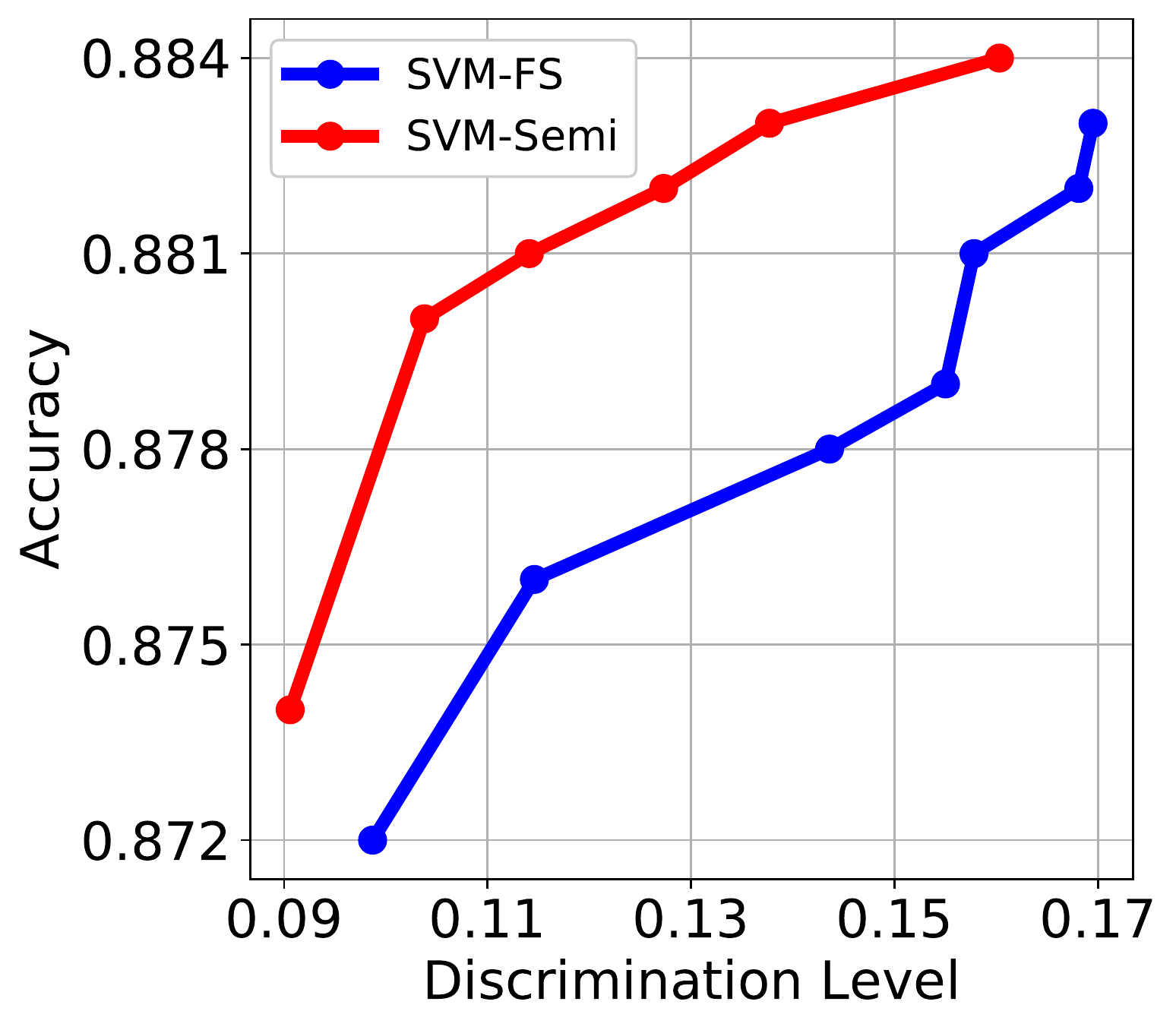}
		\centerline{(b) SVM-Bank-FNR}
	\end{minipage}
	\begin{minipage}[b]{0.49\linewidth}
		\centering
		\includegraphics[scale=0.24]{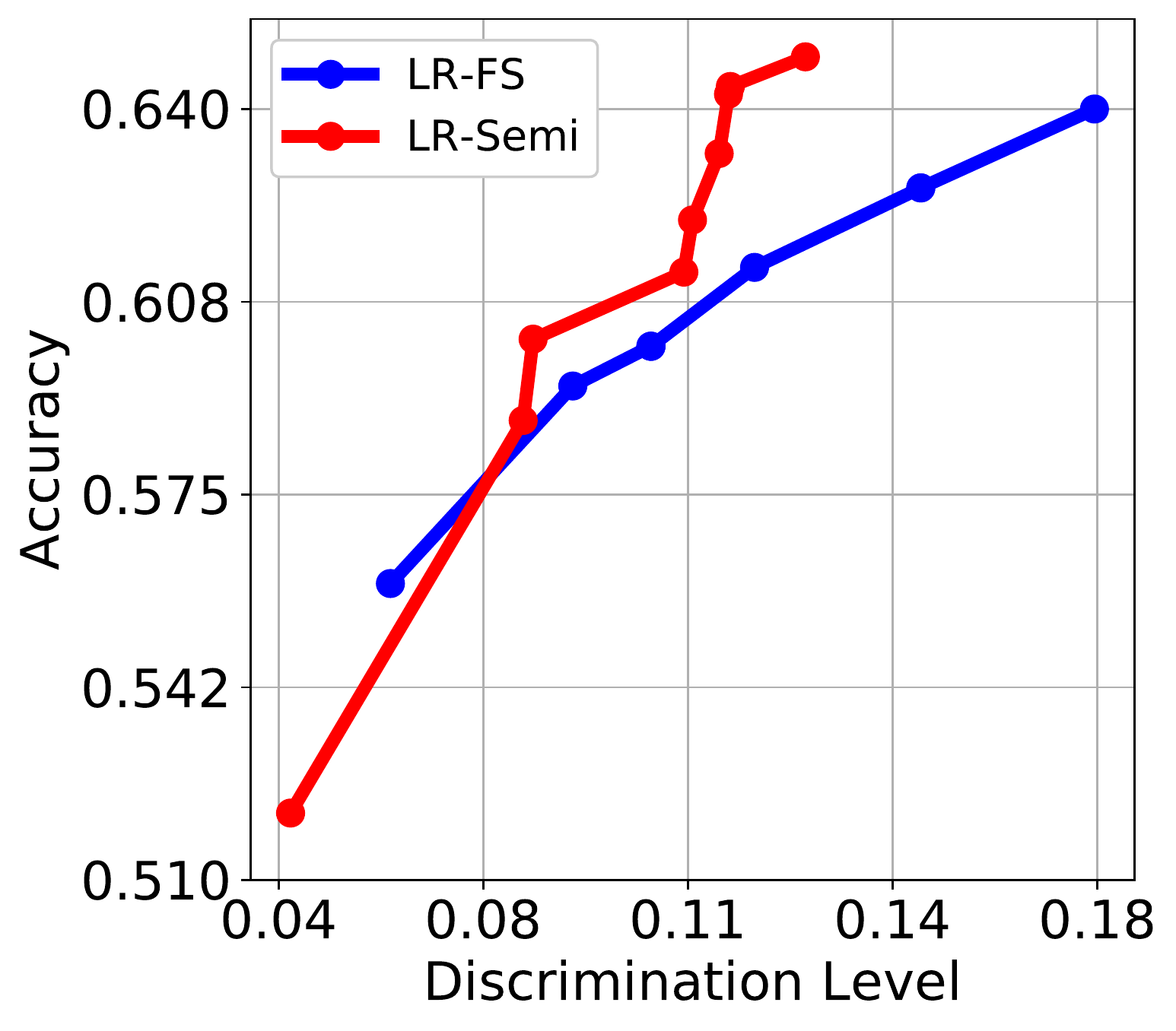}
		\centerline{(c) LR-Titanic-FNR}
	\end{minipage}
	\begin{minipage}[b]{0.49\linewidth}
		\centering
		\includegraphics[scale=0.24]{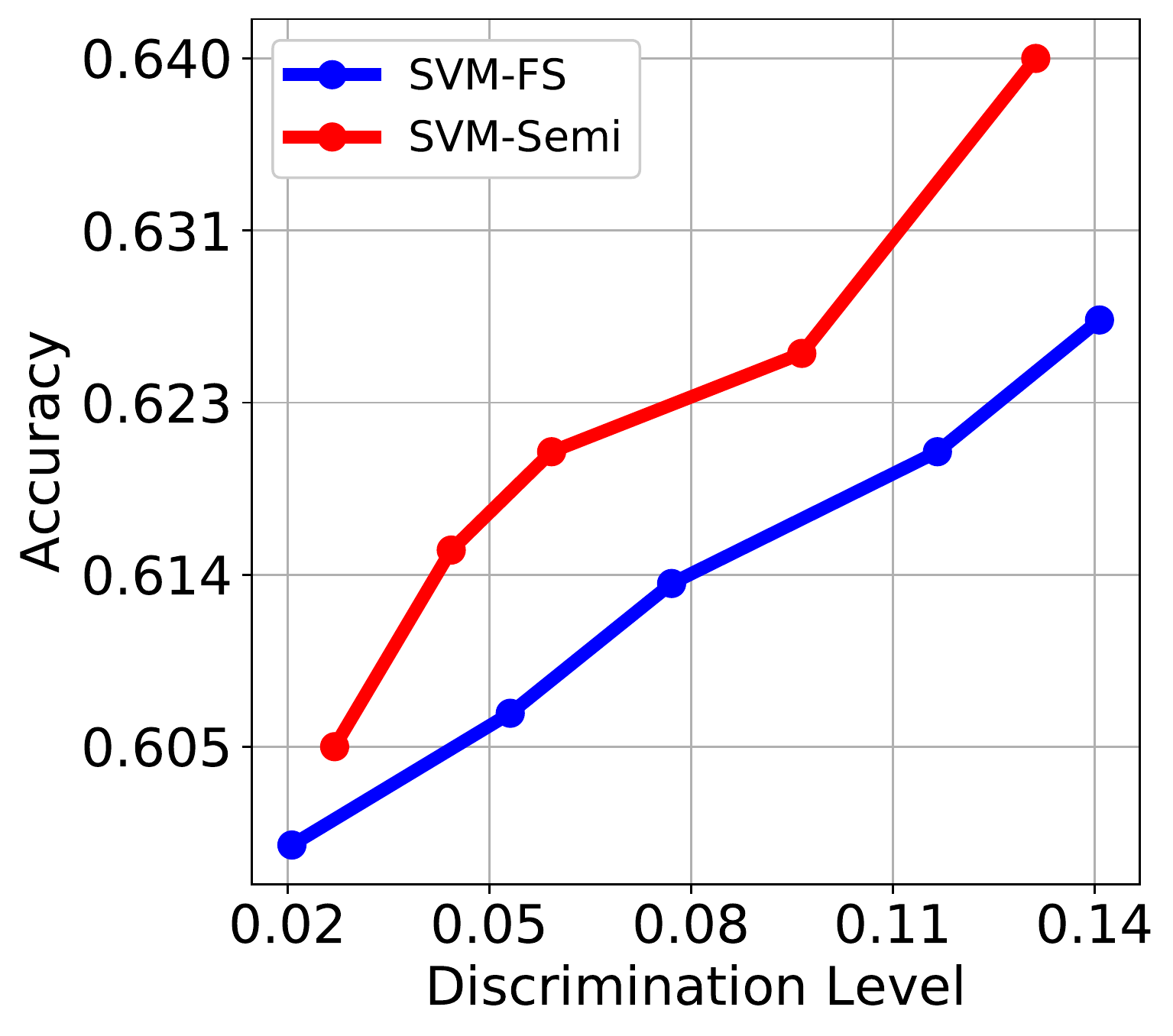}
		\centerline{(d) SVM-Titanic-FNR}
	\end{minipage}
	\caption{The trade-off between accuracy and discrimination in the proposed method Semi (Red), FS (Blue) with LR and SVM in two datasets under the metric of false negative rate. As the threshold of covariance $ c $ increases, accuracy and discrimination increase. }	
\end{figure}

\begin{figure}[ht]
	\begin{minipage}[b]{0.49\linewidth}
		\centering	
		\includegraphics[scale=0.24]{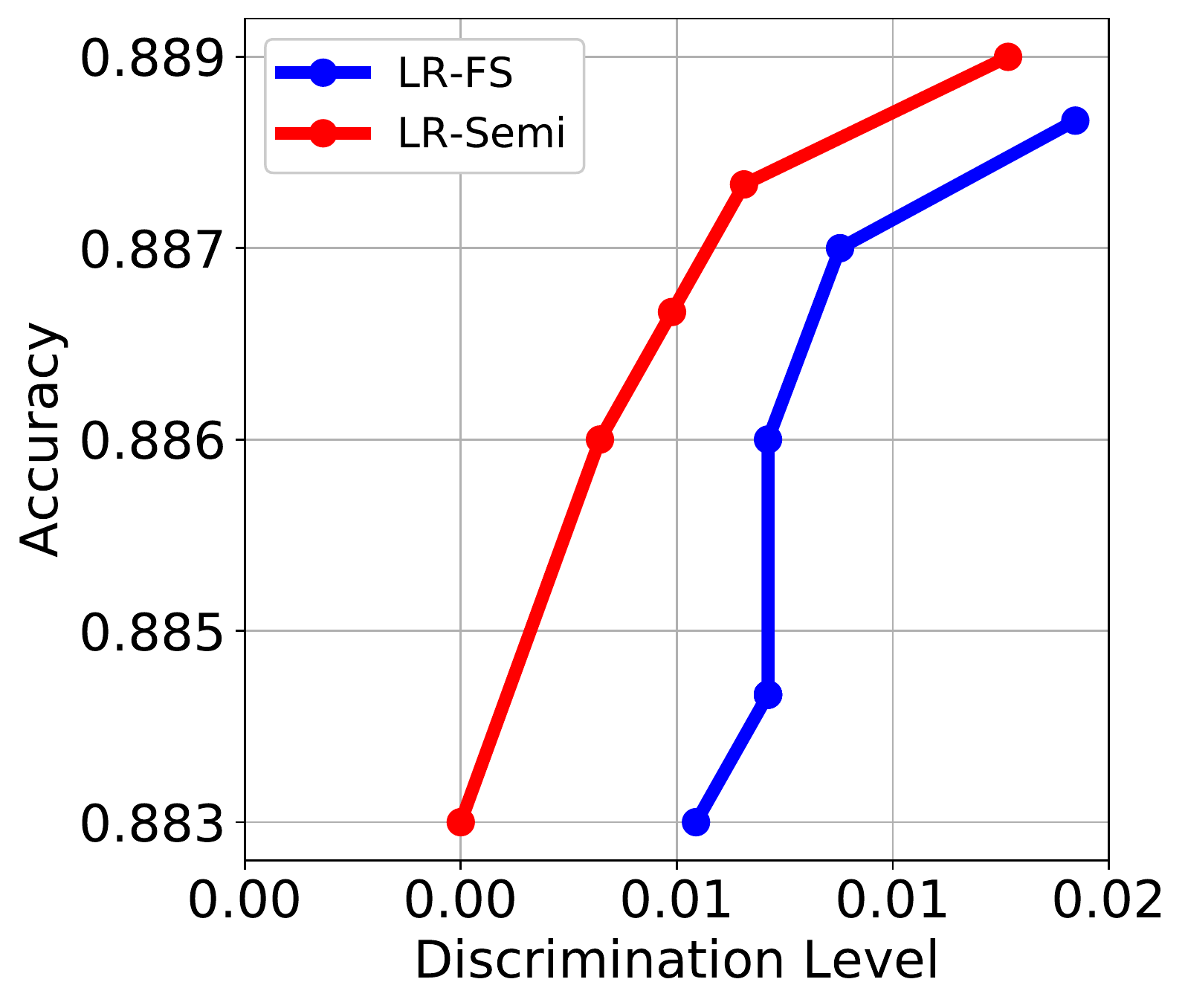}
		\centerline{(a) LR-Bank-FPR}
	\end{minipage}
	\begin{minipage}[b]{0.49\linewidth}
		\centering
		\includegraphics[scale=0.24]{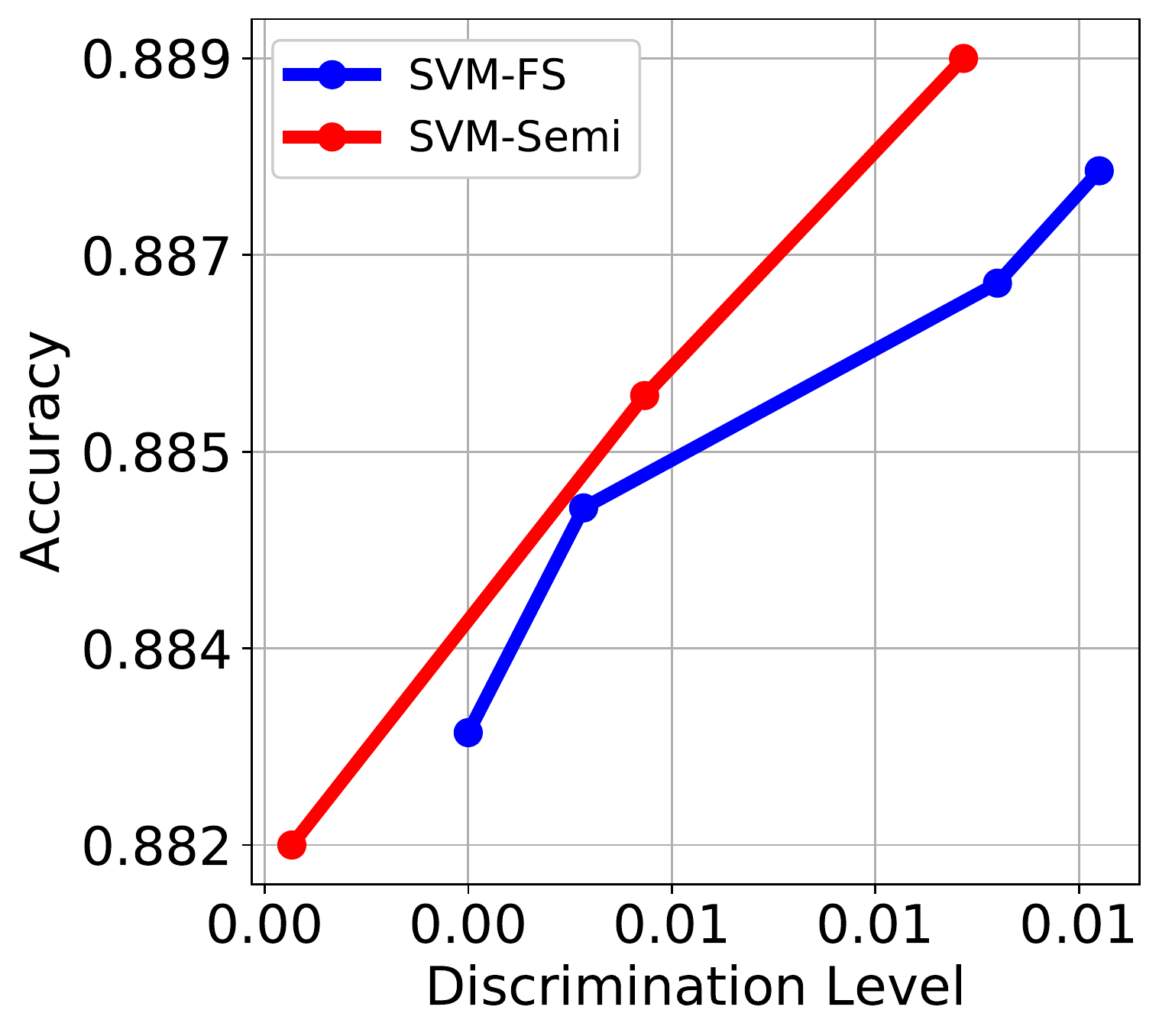}
		\centerline{(b) SVM-Bank-FPR}
	\end{minipage}
	\begin{minipage}[b]{0.49\linewidth}
		\centering
		\includegraphics[scale=0.24]{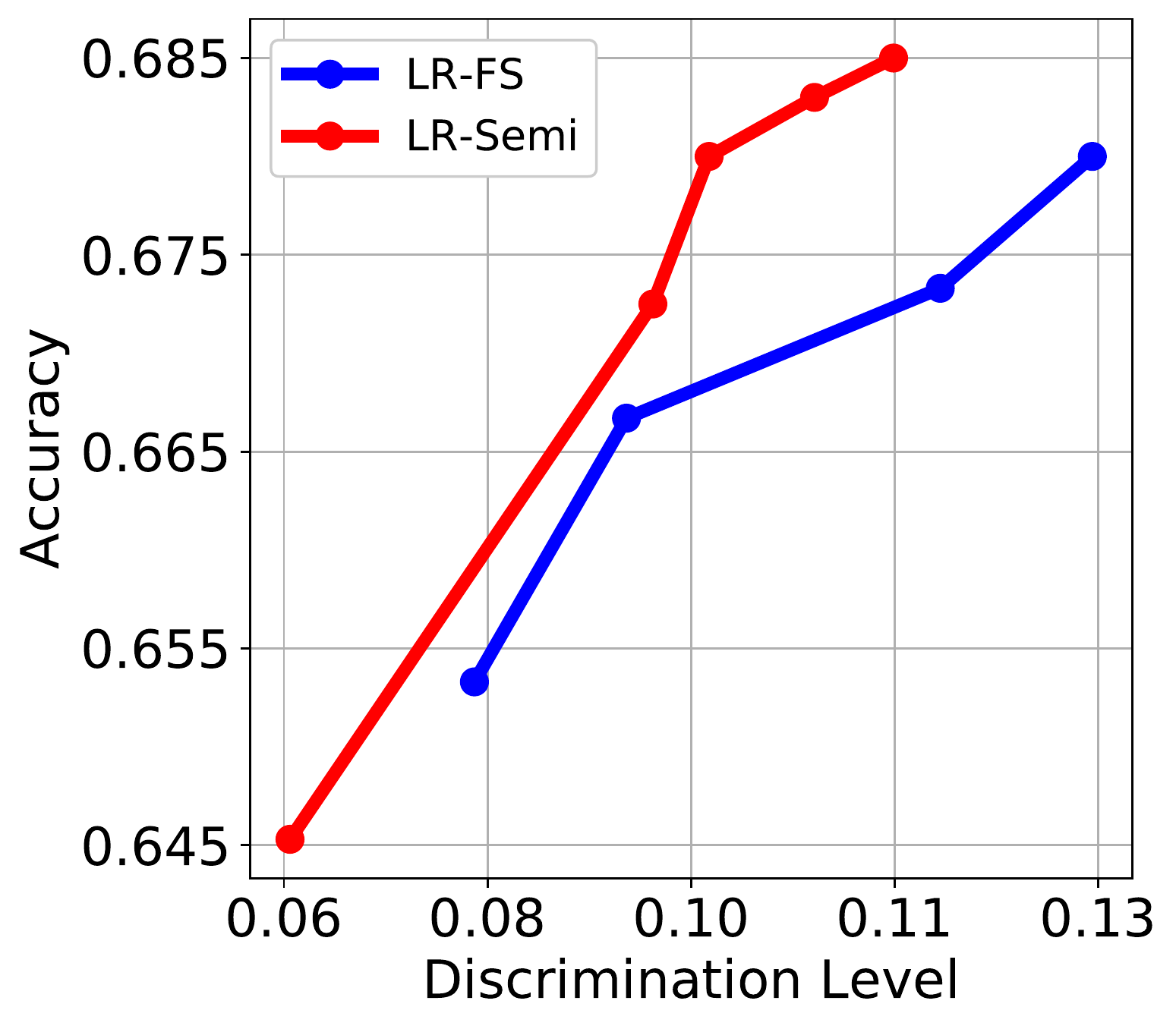}
		\centerline{(c) LR-Titanic-FPR}
	\end{minipage}
	\begin{minipage}[b]{0.49\linewidth}
		\centering
		\includegraphics[scale=0.24]{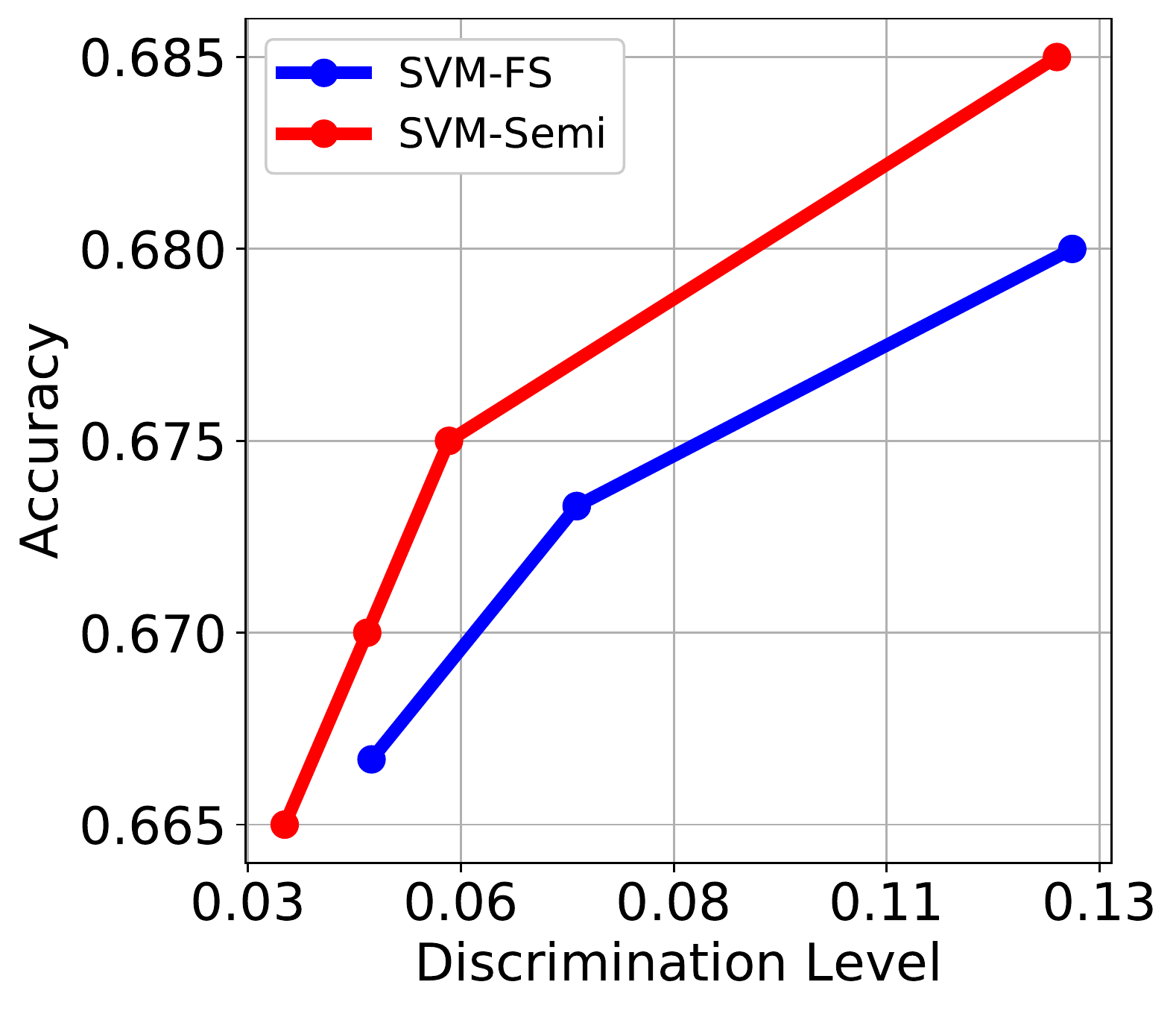}
		\centerline{(d) SVM-Titanic-FPR}
	\end{minipage}
	\caption{The trade-off between accuracy and discrimination in proposed method Semi (Red), FS (Blue) with LR and SVM in two datasets under the metric of false positive rate. As the threshold of covariance $ c $ increases, accuracy and discrimination increase. }	
\end{figure}

\subsubsection{Different Fairness Constraints under OMR}
Table 3 and Table 4 shows that different fairness constraints on labeled and unlabeled data have different impacts on the training results. Due to space limitation, we have only reported the results for the LR under the metric of OMR on the Bank and Titanic datasets.
For these tests, the size of unlabeled data is set to 4,000 data points in the Bank dataset and 400 data points in the Titanic dataset. As shown,  when varying the threshold of covariance $ c $,  different fairness constraints on labeled and unlabeled data have huge difference on the training results. When the fairness constraint is enforced in labeled data, accuracy  and discrimination increases with the increase in $ c $ in the Titanic dataset. This is because a smaller $ c $ enforces the lowest discrimination level, which results in a lower accuracy.

However, when the fairness constraint is enforced in unlabeled data, accuracy and discrimination could decrease with the increase in $ c $. This is because the label of unlabeled data appears in the fairness constraint of disparate mistreatment, and it is updated during the training. This means that the distribution of unlabeled data is not described well during the training. As a result, the fairness constraint on unlabeled data is not that effective.
\begin{table}[]
	\scalebox{0.78}{
		\begin{tabular}{lcccccccc} \hline
			Dataset    & \multicolumn{7}{c}{Bank dataset}                                                                      &             \\ \hline
			Constraint & \multicolumn{2}{l}{Labeled} & \multicolumn{2}{l}{Unlabeled} & \multicolumn{2}{l}{Combined} & \multicolumn{2}{l}{Mixed} \\ \hline
			& Acc          & Dis          & Acc           & Dis           & Acc           & Dis          & Acc         & Dis         \\             \hline
			c=0.0 & 0.8635 & 0.0905 & 0.8407 & 0.1847 & 0.8342 & 0.147  & 0.8605 & 0.0987 \\
c=0.5 & 0.8625 & 0.092  & 0.8402 & 0.1854 & 0.8342 & 0.1442 & 0.8605 & 0.0987 \\
c=1.0 & 0.8638 & 0.0922 & 0.8402 & 0.1854 & 0.835  & 0.1452 & 0.8635 & 0.1071 \\
c=1.5 & 0.8645 & 0.0918 & 0.8407 & 0.1833 & 0.835  & 0.1452 & 0.8635 & 0.1071 \\
c=2.0 & 0.8648 & 0.0907 & 0.841  & 0.1822 & 0.8347 & 0.1462 & 0.8625 & 0.1071 \\
c=2.5 & 0.8652 & 0.0914 & 0.8413 & 0.1812 & 0.8353 & 0.1469 & 0.8635 & 0.1084 \\
c=3.0 & 0.866  & 0.0923 & 0.8413 & 0.1784 & 0.8342 & 0.147  & 0.8627 & 0.1084 \\
c=3.5 & 0.8662 & 0.0927 & 0.8407 & 0.1805 & 0.8342 & 0.147  & 0.8627 & 0.1097 \\
c=4.0 & 0.8665 & 0.093  & 0.841  & 0.1795 & 0.8342 & 0.147  & 0.8627 & 0.1097 \\
c=4.5 & 0.8668 & 0.0919 & 0.8407 & 0.1791 & 0.835  & 0.1452 & 0.8635 & 0.1113 \\
c=5.0 & 0.867  & 0.0909 & 0.8407 & 0.1791 & 0.8355 & 0.1444 & 0.8635 & 0.1113 \\ \hline
		\end{tabular}%
}
\caption{The impact of fairness constraints on different datasets in terms of accuracy (Acc) and discrimination level (Dis) under the fairness metric of overall misclassification rate with LR in the Bank dataset.}
\label{tab:my-table}
\end{table}

\begin{table}[]
	\scalebox{0.78}{
		\begin{tabular}{lcccccccc} \hline
			Dataset    & \multicolumn{7}{c}{Titanic dataset}                                                                      &             \\ \hline
			Constraint & \multicolumn{2}{l}{Labeled} & \multicolumn{2}{l}{Unlabeled} & \multicolumn{2}{l}{Combined} & \multicolumn{2}{l}{Mixed} \\ \hline
			& Acc          & Dis          & Acc           & Dis           & Acc           & Dis          & Acc         & Dis         \\             \hline
			c=0.0 & 0.7448 & 0.0285 & 0.7138 & 0.3996 & 0.7655 & 0.1387 & 0.7483 & 0.0175 \\
			c=0.5 & 0.7483 & 0.0335 & 0.6966 & 0.4386 & 0.7655 & 0.1547 & 0.7483 & 0.0175 \\
			c=1.0 & 0.7517 & 0.0385 & 0.6931 & 0.4656 & 0.7552 & 0.1397 & 0.7517 & 0.0225 \\
			c=1.5 & 0.7552 & 0.0436 & 0.7103 & 0.3946 & 0.7793 & 0.1748 & 0.7448 & 0.0445 \\
			c=2.0 & 0.7552 & 0.0436 & 0.7069 & 0.4216 & 0.7724 & 0.1648 & 0.7483 & 0.0495 \\
			c=2.5 & 0.7586 & 0.0326 & 0.7103 & 0.4106 & 0.7759 & 0.1378 & 0.7448 & 0.0605 \\
			c=3.0 & 0.7552 & 0.0596 & 0.7552 & 0.2678 & 0.7552 & 0.0596 & 0.7483 & 0.0655 \\
			c=3.5 & 0.7552 & 0.0596 & 0.6931 & 0.4656 & 0.7552 & 0.0596 & 0.7483 & 0.0816 \\
			c=4.0 & 0.7586 & 0.0646 & 0.7103 & 0.4106 & 0.7586 & 0.0646 & 0.7517 & 0.0866 \\
			c=4.5 & 0.7586 & 0.0646 & 0.7138 & 0.3996 & 0.7586 & 0.0646 & 0.7517 & 0.0866 \\
			c=5.0 & 0.7552 & 0.0756 & 0.7103 & 0.4106 & 0.7552 & 0.0756 & 0.7483 & 0.0816 \\ \hline
		\end{tabular}%
}
\caption{The impact of fairness constraints on different datasets in terms of accuracy (Acc) and discrimination level (Dis) under the fairness metric of overall misclassification rate with LR in the Titanic dataset.}
\label{tab:my-table}
\end{table}
\subsubsection{The Impact of Unlabeled Data under OMR}
\begin{figure}[ht]
	\begin{minipage}[b]{0.49\linewidth}
		\centering	
		\includegraphics[scale=0.22]{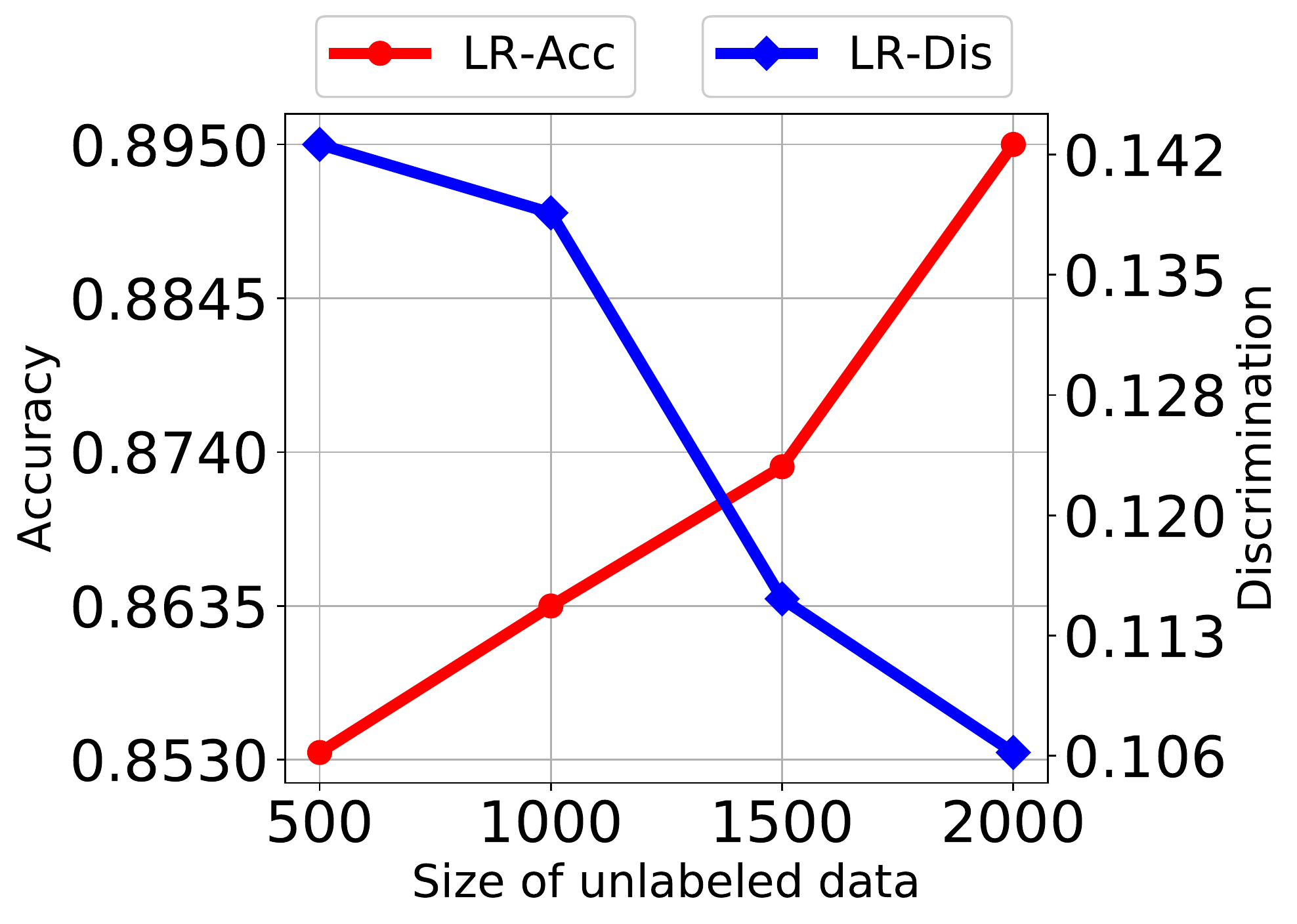}
		\centerline{(a) LR-Bank-OMR}
	\end{minipage}
	\begin{minipage}[b]{0.49\linewidth}
		\centering
		\includegraphics[scale=0.22]{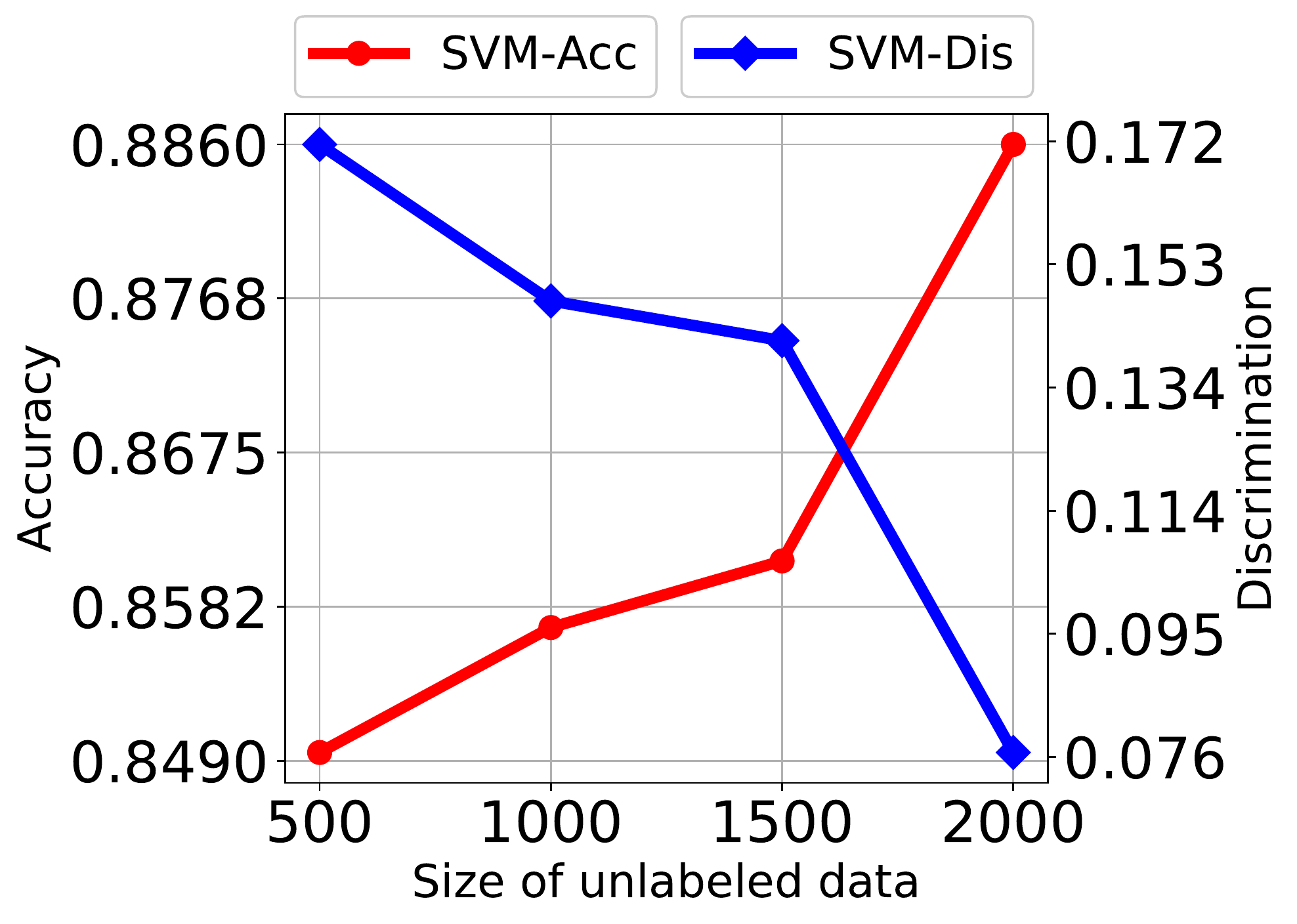}
		\centerline{(b) SVM-Bank-OMR}
	\end{minipage}
	\begin{minipage}[b]{0.49\linewidth}
		\centering
		\includegraphics[scale=0.22]{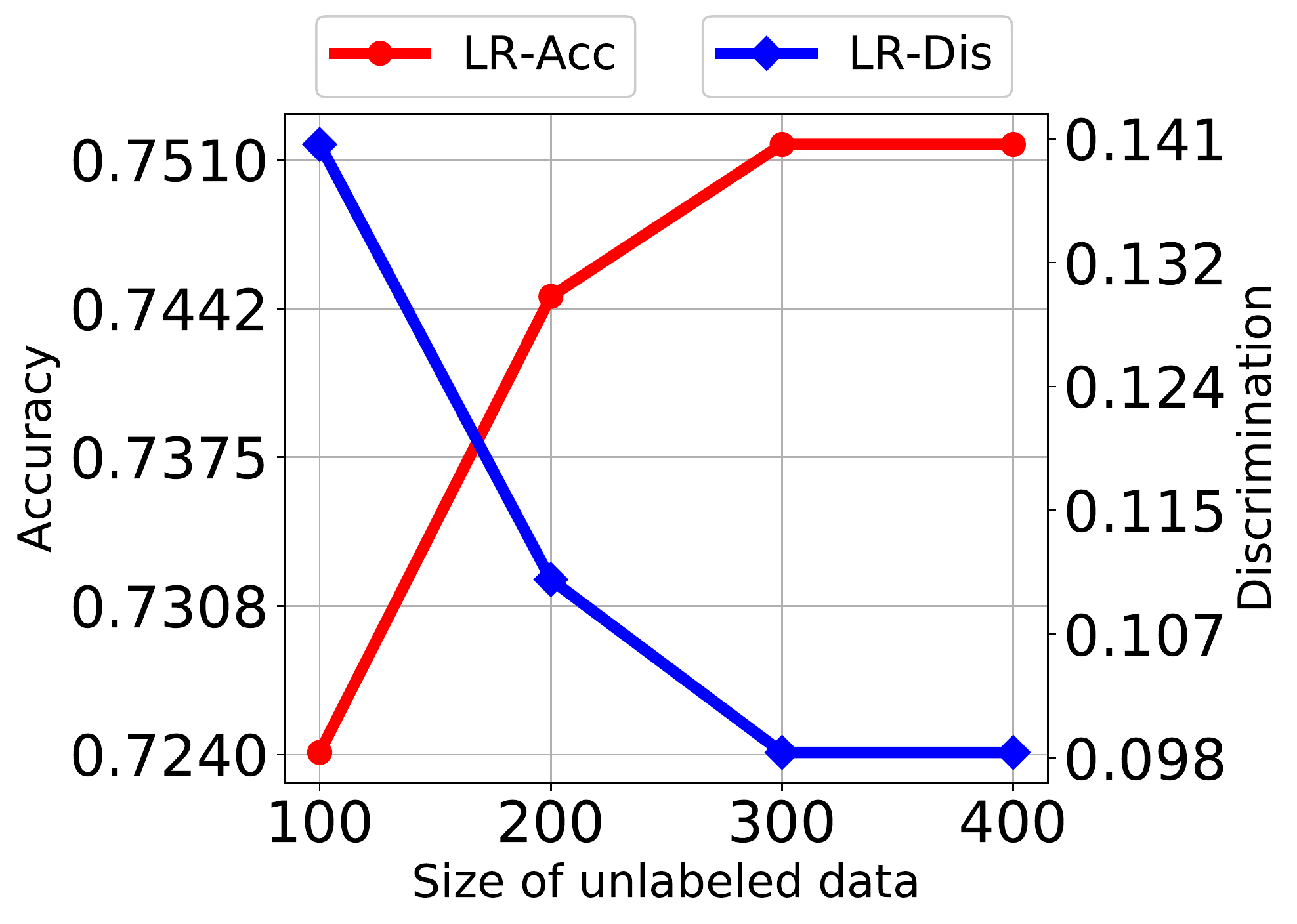}
		\centerline{(c) LR-Titanic-OMR}
	\end{minipage}
	\begin{minipage}[b]{0.49\linewidth}
		\centering
		\includegraphics[scale=0.22]{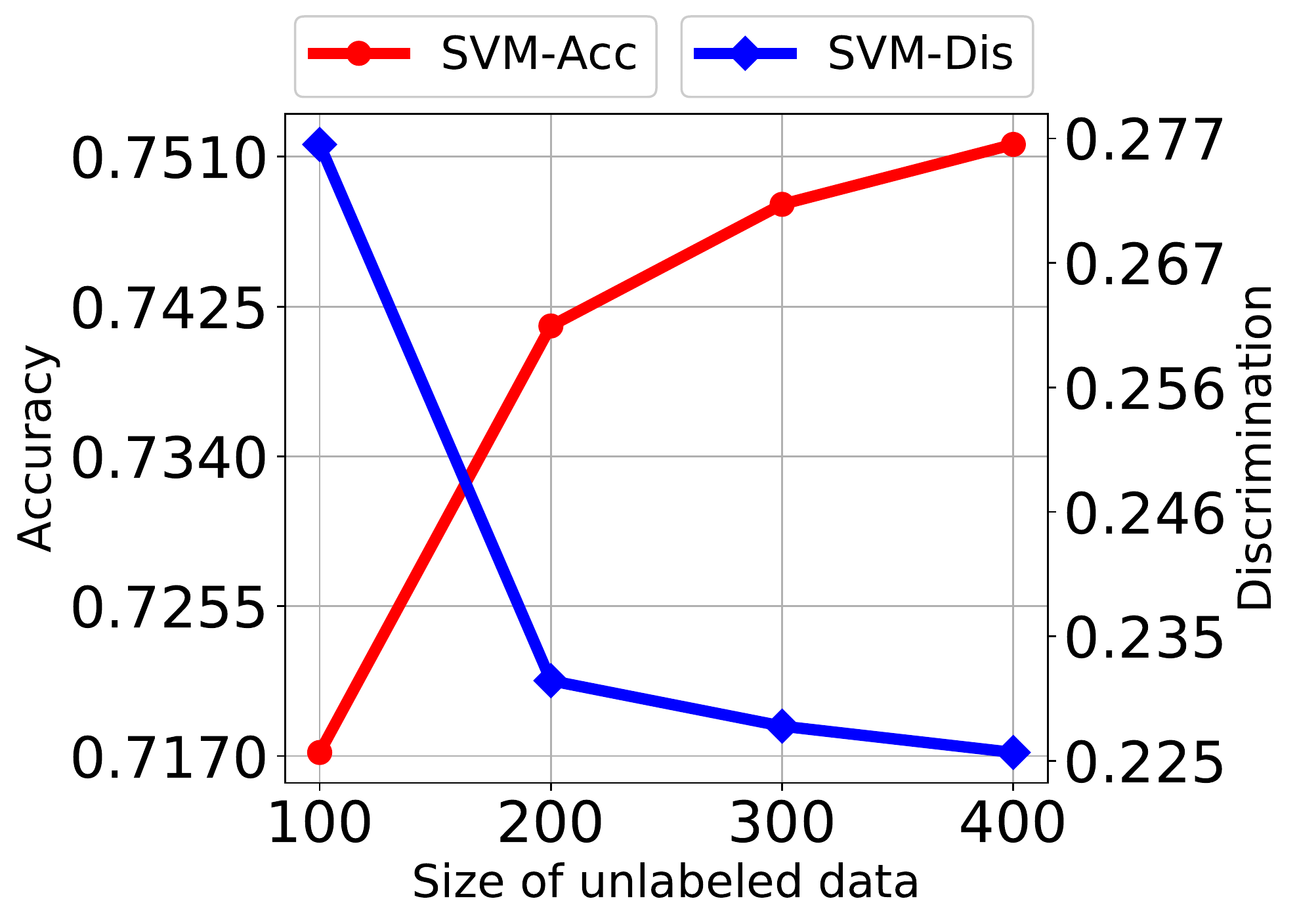}
		\centerline{(d) SVM-Titanic-OMR}
	\end{minipage}
	\caption{The impact of the amount of unlabeled data in the training set on accuracy (Red) and discrimination level (Blue) under the fairness metric of overall mistreatment rate with LR and SVM in two datasets. The X-axis is the size of unlabeled dataset; left y-axis is accuracy; and right y-axis is discrimination level. }	
\end{figure}
For these experiments, we show the impact of unlabeled data on OMR. The covariance threshold is set as $ c=1 $ for the Bank and Titanic datasets.
Figure 6 shows accuracy and discrimination level varies given different size of unlabeled data with LR and SVM on two datasets. 
As shown, before the peak is reached, as the amount of unlabeled data increases in the two data sets,  accuracy will also increase.
Discrimination level decreases at the beginning, and then stabilize in the Titanic dataset. These results indicate that discrimination in variance decreases as the amount of unlabeled data in the training set increases.
\subsection{Discussion and Summary}

\subsubsection{Discussion}
We discuss on how the proposed framework is able to reduce discrimination in terms of discrimination decomposition into discrimination in bias, variance and noise. 
Discrimination in bias depends on the model choice. Discrimination in variance relates to the size of training data. Our framework uses unlabeled data to expand the size of training data, and thus reduce the discrimination in variance. 
Discrimination in noise depends on the quality of training data. In our framework, discrimination in noise also depends on the label propagation. Predicting labels for  unlabeled data may bring discrimination in noise. This discrimination can be adjusted by the threshold $ c $ in the fairness constraint. A smaller threshold generates a smaller discrimination in noise. The reduction in discrimination in variance is generally more than the discrimination in noise induced by label propagation. Thus, when the same model is used, unlabeled data helps to lesson discrimination.
\subsubsection{Summary}
From these experiments, we can obtain some conclusions.
1) The proposed framework can make use of unlabeled data to achieve a better trade-off between accuracy and discrimination.
2) Under the fairness metric of disparate impact,  the fairness constraint on mixed labeled and unlabeled data generally has the best trade-off between accuracy and discrimination. Under the fairness metric of disparate mistreatment,  the fairness constraint on labeled data is used to achieve the trade-off between accuracy and discrimination.
3) More unlabeled data generally helps to make a better compromise between accuracy and discrimination.
4) Model choice can affect the trade-off between accuracy and discrimination. Our experiments show that SVM is more friendly to achieve a better trade-off than LR.

\section{Related Work}
In recent years, a large number of work have studied the fairness in machine learning, and we classify them into two streams.
\subsection{Fair Supervised Learning}
Methods for fair supervised learning include pre-processing, in-processing and post-processing methods.
In pre-processing,  discrimination is eliminated by guiding the distribution of training data towards to a fairer direction \cite{Kamiran2012} or by transforming the training data into a new space \cite{Zemel2013,Song2019,calmon2017optimized,Feng2019,zhao2019inherent}. The main advantage of the pre-processing method is that it does not require changes to the machine learning algorithm, so it is very simple to use.
In in-processing, discrimination is constrained by fair constraints or a regularizer during the training phase. For example, Zafar et al. \cite{Kamishima2012} used regularizer term to penalize discrimination in the learning objective.  
\cite{Zafar2017,NIPS2016_6316,Zafar2017a} designed the convex fairness constraint, called decision boundary covariance to achieve fair classification for classifiers. Recent work presented the constrained optimization problem as a two-player game, and formalized the definition of fairness as a linear inequality \cite{donini2018empirical,agarwal2018,cotter2019optimization}. This category is more flexible for optimizing different fair constraints, and solutions using this method are considered to be the most robust. In addition, these methods have shown good results in terms of accuracy and fairness.
A third approach to achieving fairness is post-processing, where a learned classifier is modified to adjust the decisions to be non-discriminatory for different groups
\cite{Hardt2016,kim2019multiaccuracy,lohia2019bias}. Post-processing does not need changes in the classifier, but it cannot guarantee a optimal classifier.

\subsection{Fair Unsupervised Learning}
Chierichetti et al. \cite{chierichetti2017fair} was the first to study fairness in clustering problems. Their solution, under both k-center and k-median objectives, was required every group to be (approximately) equally represented in each cluster. Many subsequent works have since been undertaken on the subject of fair clustering. Among these,
Rosner et al. \cite{rsner_et_al:LIPIcs:2018:9100} extended fair clustering to more than two groups.
Chen et al. \cite{DBLP:journals/corr/abs-1812-10854} consider the fair k-means problem in the streaming model, define fair coresets and show how to compute them in a streaming setting, resulting in significant reduction in the input size. 
Bera et al. \cite{NIPS2019_8741} presented a more generalized approach to fair clustering, providing a tunable notion of fairness in clustering.
Kleindessner et al. \cite{pmlr-v97-kleindessner19a} studied a version of constrained spectral clustering incorporating the fairness constraints. 
\subsection{Comparing with other work}
Existing fair learning methods mainly focus on supervised and unsupervised learning, and cannot be directly applied to SSL.
As far as we know, only \cite{NIPS2019_9437,noroozi2019leveraging,zhang2020fairness} has explored fairness in SSL. Chzhen et al. \cite{NIPS2019_9437} studied Bayes classifier under the fairness metric of equal opportunity, where labeled data is used to learn the output conditional probability, and unlabeled data is used for to calibrate threshold in the post-processing phase. However, unlabeled data is not fully used to eliminate discrimination, and the proposed method only applies in equal opportunity. 
In \cite{noroozi2019leveraging}, the proposed method is built on neural networks for SSL in the in-processing phase, where unlabeled data is marked labels with pseudo labeling.  Zhang et al. \cite{zhang2020fairness} proposed a pre-processing framework which includes pseudo labeling, re-sampling and ensemble learning to remove representation discrimination.
Our solution will focus on margin-based classifier in the in-processing stage, as in-processing methods have demonstrated good flexibility in both balancing fairness and supporting multiple classifiers and fairness metrics.
\section{Conclusion}
In this paper, we propose a framework of fair semi-supervised learning that operates during in-processing phase. 
Our framework is formulated as an optimization problem with the goal of finding weights and labeling unlabeled data by minimizing the loss function subject to fairness constraints. We analyze several different cases of fairness constraints for their effects on the optimization problem plus the accuracy and discrimination level in the results. 
A theoretical analysis on three sources of discrimination – bias, variance and noise decomposition – shows that unlabeled data  is a viable option for achieving a better trade-off between accuracy and fairness. Our experiments confirm this analysis, showing that the proposed framework provides accuracy and fairness at high levels in semi-supervised settings.

A further research direction is to explore ways of achieving fair SSL under other fairness metrics, such as individual fairness.
Further, in this paper, we assume that labeled data and unlabeled have a similar data distribution. However, this assumption may not be hold in some practical situations. Therefore, another research direction is how to achieve fair SSL when labeled and unlabeled data distribution is different.

\appendices

\section*{Acknowledgment}
This work is supported by an ARC Discovery Project (DP190100981) from Australian Research Council, Australia; and in part by NSF under grants III-1526499, III-1763325, III-1909323, and CNS-1930941. 
\ifCLASSOPTIONcaptionsoff
  \newpage
\fi

\bibliographystyle{IEEEtran}
\bibliography{ref}

\begin{thebibliography}{10}
\providecommand{\url}[1]{#1}
\csname url@samestyle\endcsname
\providecommand{\newblock}{\relax}
\providecommand{\bibinfo}[2]{#2}
\providecommand{\BIBentrySTDinterwordspacing}{\spaceskip=0pt\relax}
\providecommand{\BIBentryALTinterwordstretchfactor}{4}
\providecommand{\BIBentryALTinterwordspacing}{\spaceskip=\fontdimen2\font plus
\BIBentryALTinterwordstretchfactor\fontdimen3\font minus
  \fontdimen4\font\relax}
\providecommand{\BIBforeignlanguage}[2]{{%
\expandafter\ifx\csname l@#1\endcsname\relax
\typeout{** WARNING: IEEEtran.bst: No hyphenation pattern has been}%
\typeout{** loaded for the language `#1'. Using the pattern for}%
\typeout{** the default language instead.}%
\else
\language=\csname l@#1\endcsname
\fi
#2}}
\providecommand{\BIBdecl}{\relax}
\BIBdecl

\bibitem{Chouldechova2017}
A.~Chouldechova, ``Fair prediction with disparate impact: A study of bias in
  recidivism prediction instruments,'' \emph{Big data}, vol.~5, no.~2, pp.
  153--163, 2017.

\bibitem{Obermeyer447}
\BIBentryALTinterwordspacing
Z.~Obermeyer, B.~Powers, C.~Vogeli, and S.~Mullainathan, ``Dissecting racial
  bias in an algorithm used to manage the health of populations,''
  \emph{Science}, vol. 366, no. 6464, pp. 447--453, 2019. [Online]. Available:
  \url{https://science.sciencemag.org/content/366/6464/447}
\BIBentrySTDinterwordspacing

\bibitem{Zafar2017}
M.~B. Zafar, I.~Valera, M.~G. Rodriguez, and K.~P. Gummadi, ``{Fairness
  Constraints: Mechanisms for Fair Classification},'' in \emph{Proceedings of
  the 20th International Conference on Artificial Intelligence and Statistics},
  vol.~54, 20--22 Apr 2017, pp. 962--970.

\bibitem{Hardt2016}
M.~Hardt, E.~Price, N.~Srebro \emph{et~al.}, ``Equality of opportunity in
  supervised learning,'' in \emph{Advances in neural information processing
  systems}, 2016, pp. 3315--3323.

\bibitem{Dwork2012}
C.~Dwork, M.~Hardt, T.~Pitassi, O.~Reingold, and R.~Zemel, ``Fairness through
  awareness,'' in \emph{Proceedings of the 3rd innovations in theoretical
  computer science conference}.\hskip 1em plus 0.5em minus 0.4em\relax ACM,
  2012, pp. 214--226.

\bibitem{jung2019eliciting}
C.~Jung, M.~Kearns, S.~Neel, A.~Roth, L.~Stapleton, and Z.~S. Wu, ``Eliciting
  and enforcing subjective individual fairness,'' \emph{arXiv preprint
  arXiv:1905.10660}, 2019.

\bibitem{8885331}
T.~{Zhu} and P.~S. {Yu}, ``Applying differential privacy mechanism in
  artificial intelligence,'' in \emph{2019 IEEE 39th International Conference
  on Distributed Computing Systems (ICDCS)}, 2019, pp. 1601--1609.

\bibitem{dwork2020individual}
C.~Dwork, C.~Ilvento, and M.~Jagadeesan, ``Individual fairness in pipelines,''
  \emph{arXiv preprint arXiv:2004.05167}, 2020.

\bibitem{kusner2017counterfactual}
M.~J. Kusner, J.~Loftus, C.~Russell, and R.~Silva, ``Counterfactual fairness,''
  in \emph{Advances in Neural Information Processing Systems}, 2017, pp.
  4066--4076.

\bibitem{kilbertus2017avoiding}
N.~Kilbertus, M.~R. Carulla, G.~Parascandolo, M.~Hardt, D.~Janzing, and
  B.~Sch{\"o}lkopf, ``Avoiding discrimination through causal reasoning,'' in
  \emph{Advances in Neural Information Processing Systems}, 2017, pp. 656--666.

\bibitem{wu2019pc}
Y.~Wu, L.~Zhang, X.~Wu, and H.~Tong, ``Pc-fairness: A unified framework for
  measuring causality-based fairness,'' in \emph{Advances in Neural Information
  Processing Systems}, 2019, pp. 3404--3414.

\bibitem{Zemel2013}
R.~Zemel, Y.~Wu, K.~Swersky, T.~Pitassi, and C.~Dwork, ``Learning fair
  representations,'' in \emph{International Conference on Machine Learning},
  2013, pp. 325--333.

\bibitem{Song2019}
J.~Song, P.~Kalluri, A.~Grover, S.~Zhao, and S.~Ermon, ``Learning controllable
  fair representations,'' in \emph{Proceedings of the 22nd International
  Conference on Artificial Intelligence and Statistics (AISTATS) 2019,},
  vol.~89, 16--18 Apr 2019, pp. 2164--2173.

\bibitem{chierichetti2017fair}
F.~Chierichetti, R.~Kumar, S.~Lattanzi, and S.~Vassilvitskii, ``Fair clustering
  through fairlets,'' in \emph{Advances in Neural Information Processing
  Systems}, 2017, pp. 5029--5037.

\bibitem{Kleindessner2019}
M.~Kleindessner, P.~Awasthi, and J.~Morgenstern, ``Fair k-center clustering for
  data summarization,'' in \emph{Proceedings of the 36th International
  Conference on Machine Learning}, vol.~97, Long Beach, California, USA, 09--15
  Jun 2019, pp. 3448--3457.

\bibitem{backurs2019scalable}
A.~Backurs, P.~Indyk, K.~Onak, B.~Schieber, A.~Vakilian, and T.~Wagner,
  ``Scalable fair clustering,'' \emph{arXiv preprint arXiv:1902.03519}, 2019.

\bibitem{rsner_et_al:LIPIcs:2018:9100}
C.~R{\"o}sner and M.~Schmidt, ``{Privacy Preserving Clustering with
  Constraints},'' in \emph{45th International Colloquium on Automata,
  Languages, and Programming (ICALP 2018)}, vol. 107, 2018, pp. 96:1--96:14.

\bibitem{Chen2019ProportionallyFC}
X.~Chen, B.~Fain, C.~Lyu, and K.~Munagala, ``Proportionally fair clustering,''
  in \emph{ICML}, 2019.

\bibitem{Chen2018}
I.~Chen, F.~D. Johansson, and D.~Sontag, ``Why is my classifier
  discriminatory?'' in \emph{Advances in Neural Information Processing Systems
  31}, 2018, pp. 3539--3550.

\bibitem{Zafar2017a}
M.~B. Zafar, I.~Valera, M.~Gomez~Rodriguez, and K.~P. Gummadi, ``Fairness
  beyond disparate treatment \& disparate impact: Learning classification
  without disparate mistreatment,'' in \emph{Proceedings of the 26th
  International Conference on World Wide Web}, 2017, pp. 1171--1180.

\bibitem{Domingos2000}
P.~Domingos, ``A unified bias-variance decomposition,'' in \emph{Proceedings of
  17th International Conference on Machine Learning}, 2000, pp. 231--238.

\bibitem{zhu2002learning}
X.~Zhu and Z.~Ghahramani, ``Learning from labeled and unlabeled data with label
  propagation,'' 2002.

\bibitem{wang2007label}
F.~Wang and C.~Zhang, ``Label propagation through linear neighborhoods,''
  \emph{IEEE Transactions on Knowledge and Data Engineering}, vol.~20, no.~1,
  pp. 55--67, 2007.

\bibitem{agarwal2018reductions}
A.~Agarwal, A.~Beygelzimer, M.~Dud{\'\i}k, J.~Langford, and H.~Wallach, ``A
  reductions approach to fair classification,'' \emph{arXiv preprint
  arXiv:1803.02453}, 2018.

\bibitem{shen2016disciplined}
X.~Shen, S.~Diamond, Y.~Gu, and S.~Boyd, ``Disciplined convex-concave
  programming,'' in \emph{2016 IEEE 55th Conference on Decision and Control
  (CDC)}.\hskip 1em plus 0.5em minus 0.4em\relax IEEE, 2016, pp. 1009--1014.

\bibitem{zhang2020fairness}
T.~Zhang, T.~Zhu, J.~Li, M.~Han, W.~Zhou, and P.~Yu, ``Fairness in
  semi-supervised learning: Unlabeled data help to reduce discrimination,''
  \emph{IEEE Transactions on Knowledge and Data Engineering}, 2020.

\bibitem{NIPS2019_9437}
E.~Chzhen, C.~Denis, M.~Hebiri, L.~Oneto, and M.~Pontil, ``Leveraging labeled
  and unlabeled data for consistent fair binary classification,'' in
  \emph{Advances in Neural Information Processing Systems}, 2019, pp.
  12\,739--12\,750.

\bibitem{Kamiran2012}
F.~Kamiran and T.~Calders, ``Data preprocessing techniques for classification
  without discrimination,'' \emph{Knowledge and Information Systems}, vol.~33,
  no.~1, pp. 1--33, 2012.

\bibitem{calmon2017optimized}
F.~Calmon, D.~Wei, B.~Vinzamuri, K.~N. Ramamurthy, and K.~R. Varshney,
  ``Optimized pre-processing for discrimination prevention,'' in \emph{Advances
  in Neural Information Processing Systems}, 2017, pp. 3992--4001.

\bibitem{Feng2019}
R.~Feng, Y.~Yang, Y.~Lyu, C.~Tan, Y.~Sun, and C.~Wang, ``Learning fair
  representations via an adversarial framework,'' \emph{arXiv preprint
  arXiv:1904.13341}, 2019.

\bibitem{zhao2019inherent}
H.~Zhao and G.~Gordon, ``Inherent tradeoffs in learning fair representations,''
  in \emph{Advances in neural information processing systems}, 2019, pp.
  15\,675--15\,685.

\bibitem{Kamishima2012}
T.~Kamishima, S.~Akaho, H.~Asoh, and J.~Sakuma, ``Fairness-aware classifier
  with prejudice remover regularizer,'' in \emph{Joint European Conference on
  Machine Learning and Knowledge Discovery in Databases}.\hskip 1em plus 0.5em
  minus 0.4em\relax Springer, 2012, pp. 35--50.

\bibitem{NIPS2016_6316}
G.~Goh, A.~Cotter, M.~Gupta, and M.~P. Friedlander, ``Satisfying real-world
  goals with dataset constraints,'' in \emph{Advances in Neural Information
  Processing Systems 29}, 2016, pp. 2415--2423.

\bibitem{donini2018empirical}
M.~Donini, L.~Oneto, S.~Ben-David, J.~S. Shawe-Taylor, and M.~Pontil,
  ``Empirical risk minimization under fairness constraints,'' in \emph{Advances
  in Neural Information Processing Systems}, 2018, pp. 2791--2801.

\bibitem{agarwal2018}
A.~Agarwal, A.~Beygelzimer, M.~Dud{\'\i}k, J.~Langford, and H.~Wallach, ``A
  reductions approach to fair classification,'' \emph{arXiv preprint
  arXiv:1803.02453}, 2018.

\bibitem{cotter2019optimization}
A.~Cotter, H.~Jiang, S.~Wang, T.~Narayan, S.~You, K.~Sridharan, and M.~R.
  Gupta, ``Optimization with non-differentiable constraints with applications
  to fairness, recall, churn, and other goals,'' \emph{Journal of Machine
  Learning Research}, vol.~20, no. 172, pp. 1--59, 2019.

\bibitem{kim2019multiaccuracy}
M.~P. Kim, A.~Ghorbani, and J.~Zou, ``Multiaccuracy: Black-box post-processing
  for fairness in classification,'' in \emph{Proceedings of the 2019 AAAI/ACM
  Conference on AI, Ethics, and Society}, 2019, pp. 247--254.

\bibitem{lohia2019bias}
P.~K. Lohia, K.~N. Ramamurthy, M.~Bhide, D.~Saha, K.~R. Varshney, and R.~Puri,
  ``Bias mitigation post-processing for individual and group fairness,'' in
  \emph{Icassp 2019-2019 ieee international conference on acoustics, speech and
  signal processing (icassp)}.\hskip 1em plus 0.5em minus 0.4em\relax IEEE,
  2019, pp. 2847--2851.

\bibitem{DBLP:journals/corr/abs-1812-10854}
\BIBentryALTinterwordspacing
M.~Schmidt, C.~Schwiegelshohn, and C.~Sohler, ``Fair coresets and streaming
  algorithms for fair k-means clustering,'' \emph{CoRR}, vol. abs/1812.10854,
  2018. [Online]. Available: \url{http://arxiv.org/abs/1812.10854}
\BIBentrySTDinterwordspacing

\bibitem{NIPS2019_8741}
S.~Bera, D.~Chakrabarty, N.~Flores, and M.~Negahbani, ``Fair algorithms for
  clustering,'' in \emph{Advances in Neural Information Processing Systems 32},
  2019, pp. 4955--4966.

\bibitem{pmlr-v97-kleindessner19a}
M.~Kleindessner, P.~Awasthi, and J.~Morgenstern, ``Fair k-center clustering for
  data summarization,'' in \emph{Proceedings of the 36th International
  Conference on Machine Learning}, vol.~97, Long Beach, California, USA, 09--15
  Jun 2019, pp. 3448--3457.

\bibitem{noroozi2019leveraging}
V.~Noroozi, S.~Bahaadini, S.~Sheikhi, N.~Mojab, and P.~S. Yu, ``Leveraging
  semi-supervised learning for fairness using neural networks,'' \emph{arXiv
  preprint arXiv:1912.13230}, 2019.

\end{thebibliography}
\begin{IEEEbiography}[{\includegraphics[width=1in,height=1.25in,clip,keepaspectratio]{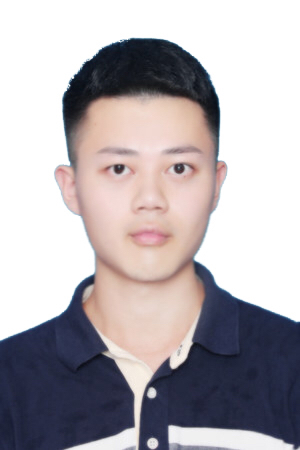}}]{Tao Zhang}
 he works towards his Ph.D degree with the school of Computer Science in the University of Technology Sydney, Australia.
	His research interests include privacy preserving, algorithmic fairness, and machine learning.
\end{IEEEbiography}
\begin{IEEEbiography} [{\includegraphics[width=1in,height=1.25in,clip,keepaspectratio]{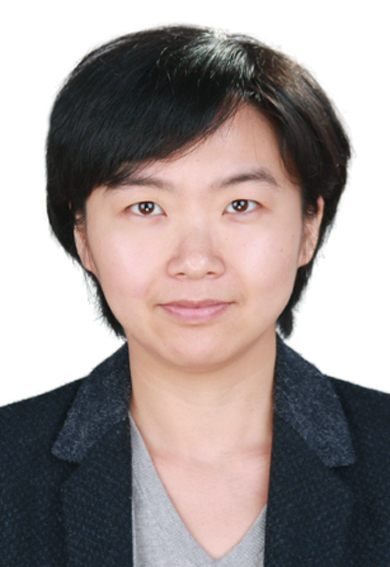}}]{Tianqing Zhu}
	 received the B.Eng. degree in chemistry and M.Eng. degree in automation from Wuhan University, Wuhan, China, in 2000 and 2004, respectively, and the Ph.D. degree in computer science from Deakin University, Geelong, Australia, in 2014. 
	 
	 She is currently an Associate Professor in the Faulty of Engineering and Information Technology with the School of Computer Science, University of Technology Sydney, Sydney, Australia. Before that, she was a Lecturer in the School of Information Technology, Deakin University, from 2014 to 2018. Her research interests include privacy preserving, data mining, and network security
\end{IEEEbiography}
\begin{IEEEbiography}[{\includegraphics[width=1in,height=1.25in,clip,keepaspectratio]{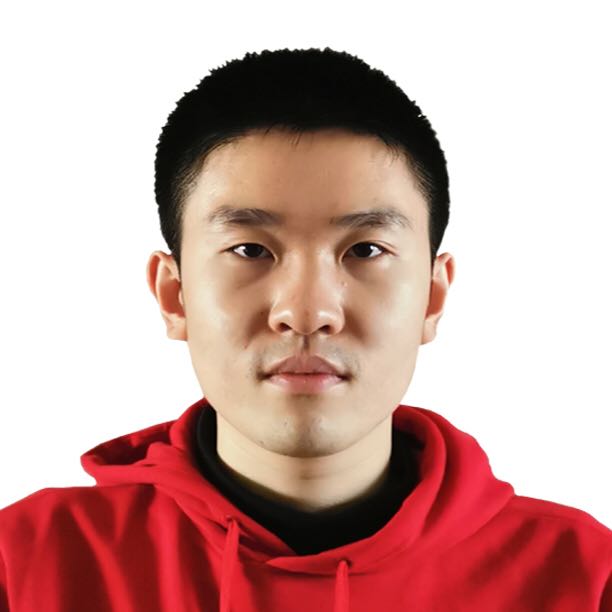}}]{Mengde Han}
	is a PhD student at University of Technology Sydney with a focus on Local Differential Privacy. He completed his Master's at the Johns Hopkins University.
\end{IEEEbiography}
\begin{IEEEbiography}[{\includegraphics[width=1in,height=1.25in,clip,keepaspectratio]{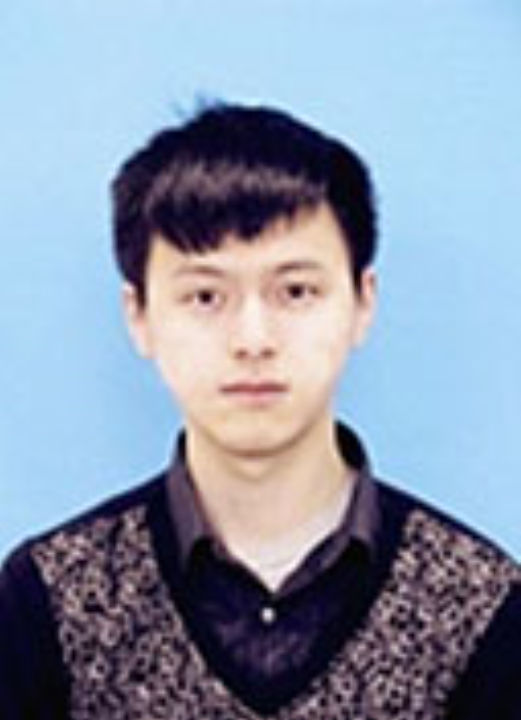}}]{Jing Li} 
	 received the B.Eng and M.Eng degrees in computer science and technology from Northwestern Polytechnical University, Xi’an, China, in 2015 and 2018, respectively.  
	 
	 Currently, he is pursuing the Ph.D degree with the Centre for Artificial Intelligence in the University of Technology Sydney, Australia. His research interests include machine learning and privacy preserving.	
\end{IEEEbiography}
\begin{IEEEbiography}[{\includegraphics[width=1in,height=1.25in,clip,keepaspectratio]{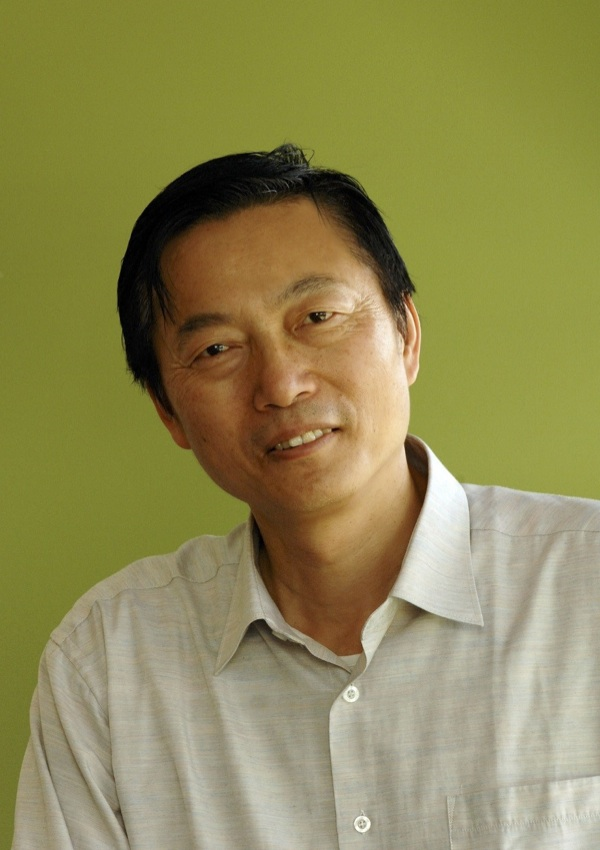}}]
	{Wanlei Zhou}
	received the B.Eng and M.Eng degrees from Harbin Institute of Technology, Harbin, China in 1982 and 1984, respectively, and the PhD degree from The Australian National University, Canberra, Australia, in 1991, all in Computer Science and Engineering. He also received a DSc degree (a higher Doctorate degree) from Deakin University in 2002. He is currently the Head of School of Computer Science in University of Technology Sydney (UTS). Before joining UTS, Professor Zhou held the positions of Alfred Deakin Professor, Chair of Information Technology, and Associate Dean (International Research Engagement) of Faculty of Science, Engineering and Built Environment, Deakin University. His research interests include security and privacy, bioinformatics, and e-learning. Professor Zhou has published more than 400 papers in refereed international journals and refereed international conferences proceedings, including many articles in IEEE transactions and journals.
\end{IEEEbiography}
\begin{IEEEbiography}[{\includegraphics[width=1in,height=1.25in,clip,keepaspectratio]{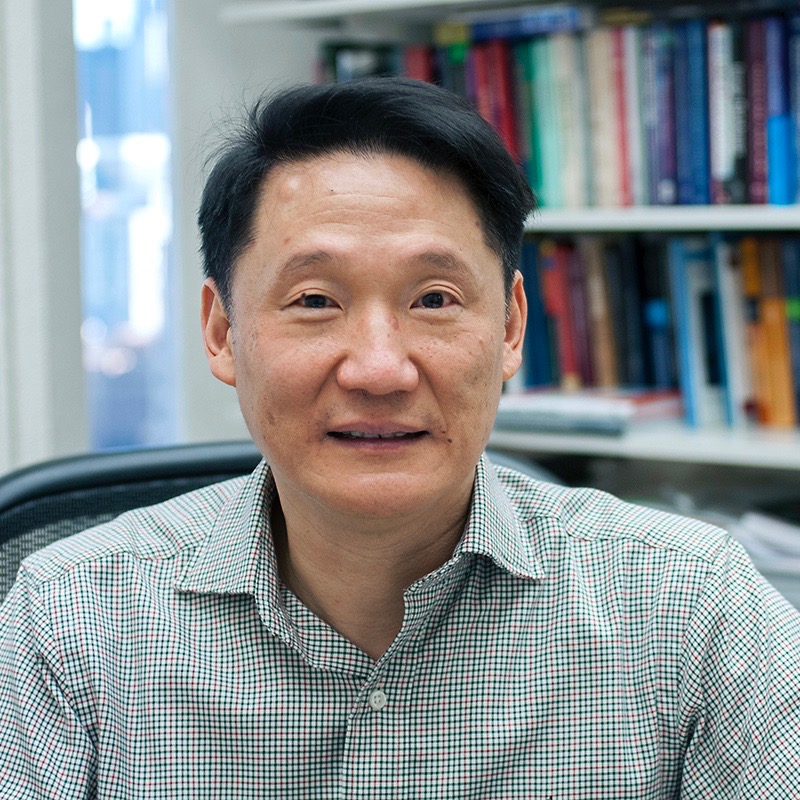}}]
	{Philip S. Yu}
	received the B.S. degree in electrical engineering from National Taiwan University, Taipei, Taiwan, the M.S. and Ph.D. degrees in electrical engineering from Stanford University, Stanford, CA, USA, and the M.B.A. degree from New York University, New York, NY, USA. He was with IBM, Armonk, NY, USA, where he was a Manager of the Software Tools and Techniques Department with the Thomas J. Watson Research Center. He is a Distinguished Professor of computer science with the University of Illinois at Chicago, Chicago, IL, USA, where he also holds the Wexler Chair in information technology. He has published over 1200 papers in peer-reviewed journals, such as the IEEE Transactions on Knowledge and Data Engineering, ACM Transactions on Knowledge Discovery from Data, VLDBJ, and the ACM Transactions on Intelligent Systems and Technology and conferences, such as KDD, ICDE, WWW, AAAI, SIGIR, ICML, and CIKM. He holds or has applied for over 300 U.S. patents. His current research interests include data mining, data streams, databases, and privacy. Dr. Yu was a recipient of the ACM SIGKDD 2016 Innovation Award for his influential research and scientific contributions on mining, fusion, and anonymization of Big Data and the IEEE Computer Society 2013 Technical Achievement Award. He was the Editor-in-Chief of ACM Transactions on Knowledge Discovery from Data (2011-2017) and IEEE Transactions on Knowledge and Data Engineering (2001-2004). He is a fellow of ACM.
\end{IEEEbiography}
\end{document}